\documentclass{article}

\PassOptionsToPackage{numbers,sort&compress}{natbib}
\usepackage[preprint,main]{neurips_2026}

\usepackage[T1]{fontenc}
\usepackage{microtype}
\usepackage{graphicx}
\usepackage{subcaption}
\usepackage{booktabs}
\usepackage{multirow}
\usepackage{amsmath, amssymb, amsfonts}
\usepackage{amsthm}
\usepackage{mathtools}
\usepackage{algorithm}
\usepackage{algorithmic}
\usepackage{hyperref}
\usepackage{enumitem}
\usepackage{xcolor}
\newcommand{\dop}{\mathrm{do}}

\usepackage[capitalise,noabbrev]{cleveref}

\newtheorem{remark}{Remark}

\newtheorem{definition}{Definition}
\newtheorem{proposition}{Proposition}
\newtheorem{lemma}{Lemma}
\newtheorem{theorem}{Theorem}
\newtheorem{corollary}{Corollary}
\newtheorem{assumption}{Assumption}

\newcommand{\E}{\mathbb{E}}
\newcommand{\Var}{\mathrm{Var}}

\DeclareMathOperator*{\argmin}{arg\,min}
\DeclareMathOperator*{\argmax}{arg\,max}

\title{Causal Mechanism Reduction:\\
Mechanism Replacement for Neural Network\\
Pruning and Abstraction}

\author{%
  Amir Asiaee \\
  Vanderbilt University Medical Center \\
  Nashville, TN \\
  \texttt{amir.asiaeetaheri@vumc.org}
}

\begin{document}
\maketitle

\begin{abstract}
Which internal mechanisms of a neural network can be replaced while
preserving the computation it performs? This question has two
versions: structured pruning asks for smaller deployable networks, while
causal abstraction asks for high-level models that commute with
interventions. We introduce \emph{causal mechanism reduction} (\textbf{CMR}), a
framework for mechanism replacement that treats a trained network as a
deterministic structural causal model and replaces selected internal
variables by constants or affine functions of retained variables. These
replacements compile exactly into smaller dense networks through bias and
weight folding, and they induce reduced causal models that can be tested
with interchange interventions.

We derive a unified second-order replacement-risk objective with special
cases that recover mean replacement, variance-based pruning, logit-distortion
scoring, and affine neuron merging. A margin-based certificate connects
logit distortion to interchange-intervention agreement, giving a formal
link between compression error and abstraction fidelity. The framework
also identifies a basic invariance requirement for mechanism scores:
functionally identical ReLU networks should induce the same reduction. In
exact positive-scaling reparameterizations, variance-based pruning
violates this requirement, with kept-set Jaccard collapsing to the chance level for the budget (about one third), while our
logit-distortion score is exactly invariant.

In experiments, CMR variants are competitive with VBP under matched
fine-tuning of DeiT-Tiny on ImageNet-100, where final-block FFN pruning is
largely recovered by fine-tuning. The clearer separation appears in the
invariance and interchange tests: CMR-Logit preserves kept sets under
reparameterization and consistently improves distributional fidelity under
interchange interventions, with small, directionally consistent
interchange-accuracy gains. CMR thus provides a common object for pruning,
compilation, and causal-abstraction verification.
\end{abstract}

\section{Introduction}
\label{sec:intro}

Neural networks are often simplified after training. In compression, the
simplified model should be smaller and faster without losing predictive
behavior. In causal abstraction and mechanistic interpretability, the
simplified model should preserve how internal variables support behavior
under interventions. These goals are usually studied separately, but
they share a basic question: which internal mechanisms
can be replaced without changing what the network does?

The common operation is \emph{mechanism replacement}. A pruning method may
replace a hidden unit by a constant and fold the resulting contribution
into the next bias; an abstraction method may keep a subset of internal
variables and ask whether interventions on those variables commute with
the original computation. In this view, structured pruning and causal
abstraction differ less in the operation they perform than in the
fidelity metric they demand: deployable task behavior for pruning,
interchange-intervention agreement for abstraction.

We formalize this operation as \emph{causal mechanism
reduction}. Given a trained feedforward network viewed as a deterministic
structural causal model (\textbf{SCM}) \citep{pearl2009causality}, we replace
selected units by constants or by affine functions of retained units. The
same reduced object has two readings. Computationally, it compiles exactly
into a smaller dense network through bias and weight folding. Causally,
it defines an explicit reduced SCM whose state map is the projection onto
retained activations and whose fidelity can be checked under interchange
interventions \citep{geiger2021causal,beckers2019abstracting}.

This is not only a change of vocabulary. A unified
second-order replacement-risk theorem (\Cref{thm:cmr-risk}) shows that
mean replacement, variance-based pruning (\textbf{VBP}) \citep{berisha2025vbp},
logit-distortion scoring, and affine neuron merging
\citep{DBLP:conf/nips/KimKPJ20} are all special cases of the same local
quadratic objective. A margin-based interchange-fidelity certificate
(\Cref{thm:margin}) then connects one tractable surrogate, expected
squared logit distortion, to class-level agreement under interchange
interventions. The same replacement score can therefore be read both as a
compression criterion and as a proposal for a verifiable causal
abstraction.

The formalism also reveals a concrete failure mode of an existing pruning
heuristic. ReLU networks admit exact function-preserving rescalings:
multiplying a hidden unit by a positive scalar and dividing its outgoing
weights by the same scalar leaves the network function unchanged. A
behavioral reduction criterion should not identify different mechanisms
in two such networks. VBP is not invariant to this transformation because
activation variance changes with scale; logit distortion is invariant. In
our stress test, exact ReLU scaling reparameterizations on $[0.01, 100]$
drive VBP's kept-set Jaccard to chance-level overlap ($0.35$
against a random-subset floor of $\approx 1/3$) across functionally
identical networks, while CMR-Logit remains at Jaccard $1.0$, with
downstream effects on interchange-intervention agreement and
distributional fidelity.

Empirically, the results are deliberately two-sided. On
DeiT-Tiny final-block FFN pruning after transfer to ImageNet-100, all
methods land within $0.7$ top-1 points of the $84.62$ baseline after
matched fine-tuning (most cells slightly above it), including random pruning; this indicates that the
CMR variants are competitive with VBP in a modern transformer setting,
but also shows that top-1 accuracy is saturated in this protocol. The
clearest method signal appears in the invariance stress test, while
interchange-intervention experiments on CIFAR-10 ConvNet and ResNet-56 /
CIFAR-100 verify that compiled reductions can be evaluated as approximate
causal abstractions, not only as compressed predictors.

\paragraph{Contributions.}
\begin{itemize}[leftmargin=1.25em]
\item \textbf{A unified replacement primitive.} We define constant and
affine mechanism replacements that compile into smaller dense
networks and induce explicit reduced SCMs that can be verified under interchange
interventions.
\item \textbf{Theory connecting pruning scores to abstraction fidelity.}
We prove a unified replacement-risk theorem that recovers VBP, Logit-MSE /
CMR-Logit, and affine weighted-least-squares (\textbf{WLS}) scoring as special cases, and a margin-based
certificate connecting logit distortion to interchange-intervention
agreement.
\item \textbf{Evidence for when the abstraction lens matters.} We show a
large reparameterization-invariance failure for VBP, verify compiled
reductions under interchange interventions, and compare CMR variants with
VBP, magnitude, random, and DepGraph baselines across DeiT-Tiny /
ImageNet-100, CIFAR-10 ConvNet, and ResNet-56 / CIFAR-100.
\end{itemize}

\paragraph{Notation and roadmap.}
\Cref{sec:related} places the work relative to causal abstraction,
structured pruning, and second-order compression.
\Cref{sec:setup} defines mechanism replacement for deterministic neural
SCMs, including interventional risk and exact compilation.
\Cref{sec:cmr-theorem} presents the replacement-risk theorem, the
interchange-fidelity certificate, and the invariance result.
\Cref{sec:experiments} reports the invariance stress test, modern
transformer benchmark, and interchange-fidelity study. \Cref{sec:limitations}
discusses limitations.

\section{Related Work and Positioning}
\label{sec:related}

\paragraph{Causal abstraction and mechanistic interpretability.}
Causal abstraction asks when a high-level structural causal model (SCM)
commutes with a low-level one under interventions
\citep{rubenstein2017causal,beckers2019abstracting}; approximate
abstraction allows graded mismatch between low- and high-level
models \citep{beckers2020approximate}. Interchange interventions
turn this criterion into a neural-network test by swapping internal states
and measuring interchange-intervention accuracy (\textbf{IIA}) \citep{geiger2021causal}.
Interchange-intervention training shows that behavioral accuracy can miss
mechanistic failures \citep{geiger2022iit}. Later work studies
distributed alignment search and frames causal abstraction as a formal
language for mechanistic interpretability
\citep{geiger2024finding,geiger2024causal}. Here, abstraction is
constructive: we search over restricted mechanism replacements that produce
an explicit reduced SCM and verify the compiled model under interchange
interventions.

\paragraph{Alignment-map complexity.}
Recent critiques sharpen the role of the abstraction map. With sufficiently
expressive nonlinear alignment maps, high IIA can be uninformative even for
randomly initialized networks \citep{sutter2025nonlinear}. CMR avoids this
failure mode by fixing the maps: the state map is the fixed projection onto
retained activations, the intervention map is the identity on retained
coordinates, and the reported IIA is tied to the margin certificate of
\Cref{thm:margin} rather than to a learned alignment.

\paragraph{Structured and activation-statistic pruning.}
Structured pruning removes units, channels, heads, or blocks, yielding
smaller dense computations rather than
sparse masks. Classical compression pipelines combine pruning with
quantization and coding \citep{han2016deepcompression}; structured CNN
methods remove filters or channels using magnitude, sparsity-inducing gates,
or Taylor criteria
\citep{li2017pruningfilters,liu2017networkslimming,he2017channelpruning,molchanov2017pruning}.
Variance-based pruning (VBP) removes low-variance MLP units and folds mean
activations into downstream biases \citep{berisha2025vbp}; mean replacement,
neuron merging, dependency-aware pruning, and submodular structured pruning
use related selection and replacement principles
\citep{evci2018mean,DBLP:conf/nips/KimKPJ20,fang2023depgraph,elhalabi2022data}.
CMR treats these operations as instances of mechanism replacement:
constant replacement recovers mean folding and VBP under stationarity and
uniform curvature, while affine replacement connects to neuron merging and
soft interventions in causal models \citep{massidda2023causal}. Beyond a new score, this yields a shared objective that exposes when an
activation statistic is invariant, when it is fragile, and when the reduced
network should be read as an approximate abstraction.

\paragraph{Second-order and modern transformer pruning.}
Classical second-order pruning methods such as Optimal Brain Damage and
Optimal Brain Surgeon use Taylor expansions in weight space
\citep{DBLP:conf/nips/CunDS89,DBLP:conf/nips/HassibiS92}. Modern variants
scale this idea by approximating or inverting curvature for large networks, including
WoodFisher, OBC/GPTQ/SparseGPT, and Hessian-diagonal tooling
\citep{singh2020woodfisher,frantar2022optimal,frantar2023sparsegpt,dangel2020backpack,elsayed2024revisiting}.
Recent transformer and LLM pruning methods such as Wanda, LLM-Pruner, fast
BERT pruning, and X-Pruner provide strong practical baselines for weight,
channel, or block removal \citep{sun2024wanda,ma2023llmpruner,kwon2022fast,yu2023xpruner};
attention-head pruning shows that many transformer heads can be removed with
small behavioral loss \citep{michel2019sixteen,voita2019analyzing}.
CMR targets a different object: it scores activation mechanisms and their
replacements, compiles the selected interventions by bias or weight folding,
and gives the resulting dense network an explicit interventional semantics.


\section{Problem Setup: Mechanism Replacement}
\label{sec:setup}

\subsection{Networks as deterministic SCMs}
\label{sec:scm}

Let $f_\theta:\mathcal{X}\to\mathbb{R}^q$ be a trained feedforward
network with fixed parameters $\theta$. Given a task loss
$\ell:\mathbb{R}^q\times\mathcal{Y}\to\mathbb{R}_{\geq 0}$ and a
calibration set
$\mathcal{D}_{\mathrm{cal}}=\{(x_s,y_s)\}_{s=1}^n$, define
$L(\theta):=n^{-1}\sum_{s=1}^n\ell(f_\theta(x_s),y_s)$.
(We overload $\ell$ as the task loss and as a layer index; the
loss always carries arguments, and layer indices appear as sub- or
superscripts.)
For layer $\ell \in [L]$ with width $d_\ell$, let
$a^{(\ell)}(x)\in\mathbb{R}^{d_\ell}$ denote the post-nonlinearity
activation vector and let $a^{(0)}(x):=x$. The calibration activations are
$A^{(\ell)}\in\mathbb{R}^{n\times d_\ell}$ with
$A^{(\ell)}_{s,j}:=a^{(\ell)}_j(x_s)$.
For a unit $j$, write $\mathbf{a}_j:=A^{(\ell)}_{:,j}$, with layer
superscripts omitted when clear; let
$\bar a_j:=n^{-1}\mathbf{1}_n^\top\mathbf{a}_j$ and
$\Var[\mathbf{a}_j]:=n^{-1}\|\mathbf{a}_j-\bar a_j\mathbf{1}_n\|_2^2$.

A feedforward network is also viewed here as a deterministic SCM over its internal
activations. The exogenous input is $X$, the endogenous variables are the
activations $\{a_j^{(\ell)}\}_{\ell,j}$, and the structural equations are
the forward computations
$z^{(\ell)}=W^{(\ell)}a^{(\ell-1)}+b^{(\ell)}$ and
$a^{(\ell)}=\sigma^{(\ell)}(z^{(\ell)})$.
This SCM view is not a claim that the network recovers exogenous
real-world causes. It supplies intervention semantics for a deterministic
computation graph.

\subsection{Replacement, risk, and compilation}
\label{sec:compile}

CMR modifies this SCM by replacing selected internal mechanisms with the
trained weights held fixed. For a unit $j$ in layer $\ell$, the
constant replacement $\dop(a_j^{(\ell)}:=c)$
severs the incoming edges to that unit and sends the constant $c$ to all
downstream consumers. More generally, fix a retained set
$K\subseteq[d_\ell]$ and a replaced set $S=[d_\ell]\setminus K$. A
replacement class $\Phi$ maps retained activations to replacements for
$A_S$: constants use $\phi(A_{s,K})=c$, while affine replacements use
$\phi(A_{s,K})=\beta+B A_{s,K}$ or a restricted parent subset.

For a single constant intervention, let
$f_\theta^{\dop(\ell,j:=c)}$ be the intervened network and define
$L_{\ell,j}(c):=n^{-1}\sum_{s=1}^n
\ell(f_\theta^{\dop(\ell,j:=c)}(x_s),y_s)$.
The corresponding effect is
$\Delta L_{\ell,j}(c):=L_{\ell,j}(c)-L(\theta)$. Structured compression
can therefore be posed as a constrained intervention-selection problem:
\begin{equation}
\label{eq:compression-objective}
\min_{\substack{\mathcal{I} \subseteq [L] \times [d_\bullet],\;
|\mathcal{I}| = k \\ \{c_{(\ell,j)}\}_{(\ell,j) \in \mathcal{I}}}}
\quad L^{\dop(\mathcal{I}, \mathbf{c})}.
\end{equation}
The exact objective is expensive to search directly, so \Cref{sec:cmr-theorem}
derives the local replacement-risk proxy used for scoring.

\begin{proposition}[Bias-folding equivalence]
\label{prop:bias-fold}
Consider layer $\ell$ with output $a^{(\ell)} \in \mathbb{R}^{d_\ell}$,
followed by an affine transformation $u=W a^{(\ell)}+b$,
where $W \in \mathbb{R}^{m \times d_\ell}$ and $b \in \mathbb{R}^m$.
Suppose unit $j$ is clamped to constant $c$. Define
$W':=W_{:,\backslash j}\in\mathbb{R}^{m\times(d_\ell-1)}$ and
$b':=b+cW_{:,j}$. Then, for all $a^{(\ell)}_{\backslash j}$,
$W a^{(\ell)}+b|_{a^{(\ell)}_j=c}=W'a^{(\ell)}_{\backslash j}+b'$.
\end{proposition}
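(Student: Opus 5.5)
The plan is to expand the matrix--vector product column by column and isolate the contribution of the clamped coordinate. First I would write $W a^{(\ell)} = \sum_{i=1}^{d_\ell} W_{:,i}\, a^{(\ell)}_i$, the standard decomposition of a matrix--vector product as a linear combination of the columns of $W$. Next I would split this sum into the term $i=j$ and the terms $i\neq j$. This gives
\begin{equation*}
W a^{(\ell)} + b = \sum_{i\neq j} W_{:,i}\, a^{(\ell)}_i + W_{:,j}\, a^{(\ell)}_j + b .
\end{equation*}

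Then I would observe that, by definition of $W'=W_{:,\backslash j}$ and $a^{(\ell)}_{\backslash j}$, the sum over $i\neq j$ is exactly $W' a^{(\ell)}_{\backslash j}$. This requires both objects to delete index $j$ and to keep the remaining indices in the same order. Substituting the intervention $a^{(\ell)}_j=c$ turns the middle term into the constant vector $cW_{:,j}$. Grouping it with $b$ gives $b'=b+cW_{:,j}$, which yields the claimed identity for every $a^{(\ell)}_{\backslash j}$. No constraint on the remaining coordinates is used, so the identity holds pointwise.

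I expect no real obstacle; the only point needing care is notational. I will state explicitly that the column deletion in $W'$ and the coordinate deletion in $a^{(\ell)}_{\backslash j}$ use the same ordering of $[d_\ell]\setminus\{j\}$. I would also remark on the SCM reading. Under $\dop(a^{(\ell)}_j:=c)$, every downstream consumer of $a^{(\ell)}$ sees $u$ only through this affine map, so the intervened network and the network with $W',b'$ compute the same function of the input. The same computation iterated over a set $S$ gives $b+W_{:,S}c_S$, which is the extension I would note for use with multi-unit replacements.
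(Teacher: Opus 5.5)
Your proposal is correct and follows the same route as the paper's proof. Both split $Wa^{(\ell)}+b$ into the column-$j$ term and $W_{:,\backslash j}a^{(\ell)}_{\backslash j}$, then absorb the clamped contribution $cW_{:,j}$ into the bias. Your explicit note on consistent index ordering and the multi-unit extension $b+W_{:,S}c_S$ match the paper's remarks.
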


For a set $S$ of constant replacements, the same folding accumulates as
$b'=b+\sum_{j\in S}c_jW_{:,j}$ and $W'=W_{:,\backslash S}$.
Affine replacements fold analogously by redistributing the replaced unit's
outgoing column onto the parent columns and bias. Thus CMR can be read in two ways:
as a logical intervention in the neural SCM and as a smaller
dense network with no runtime masking (\Cref{fig:overview}). All proofs and longer derivations for
the main-text statements are collected in Appendix~\ref{app:deferred}.

\section{Theory: Replacement Risk, Fidelity, and Invariance}
\label{sec:cmr-theorem}

\subsection{Unified replacement risk}
\label{sec:taylor}

The central score is a local approximation to the loss incurred by replacing
mechanisms at a fixed layer. For a replaced set $S$ and retained set
$K=[d_\ell]\setminus S$, write the replacement perturbation on sample $s$ as
$\delta_s:=\phi(A_{s,K})-A_{s,S}$.

\begin{theorem}[Unified replacement-risk decomposition]
\label{thm:cmr-risk}
Fix a layer $\ell$, a retained set $K \subseteq [d_\ell]$, a replaced
set $S = [d_\ell] \setminus K$, a replacement class $\Phi$ (constants,
affine functions of $A_K$, or learned mechanisms), and a discrepancy
$d$ on logits or task loss that is twice differentiable at the observed
activations. The second-order local replacement-risk proxy has the
quadratic form
\[
Q_S(\phi) \;=\; \E_s\!\Bigl[\, g_{s,S}^{\top}\delta_s
+ \tfrac12\,\delta_s^{\top} H_{s,S}\,\delta_s \,\Bigr],
\qquad
\delta_s \;:=\; \phi(A_{s,K}) - A_{s,S}.
\]
Under block-diagonal or diagonal curvature, $Q_S$ decomposes across
replaced mechanisms, yielding independent per-unit or per-group scores.
The constant, affine, VBP \citep{berisha2025vbp}, logit-distortion, and
neuron-merging \citep{DBLP:conf/nips/KimKPJ20} cases follow as named
special cases.
\end{theorem}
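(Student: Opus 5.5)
I will obtain $Q_S$ by a second-order Taylor expansion of the downstream discrepancy in the activations of layer $\ell$. Hold the trained weights fixed. For each calibration sample $s$, define the per-sample map $F_s(a):=d\bigl(h_\theta^{(>\ell)}(a),\,\cdot\bigr)$, where $h_\theta^{(>\ell)}$ is the part of the network downstream of layer $\ell$. The second argument is either the label $y_s$ (task loss) or the clean logits $f_\theta(x_s)$ (logit discrepancy). Replacing $A_{s,S}$ by $\phi(A_{s,K})$ changes the layer-$\ell$ activation vector by $\delta_s$ on the coordinates in $S$ and leaves $K$ untouched. By twice differentiability at $a^{(\ell)}(x_s)$,
\[
F_s\bigl(a^{(\ell)}(x_s)+P_S^\top\delta_s\bigr)-F_s\bigl(a^{(\ell)}(x_s)\bigr)
= g_{s,S}^\top\delta_s+\tfrac12\delta_s^\top H_{s,S}\delta_s+o(\|\delta_s\|^2).
\]
Here $g_{s,S}$ and $H_{s,S}$ are the $S$-restrictions of the gradient and Hessian of $F_s$, and $P_S$ is the coordinate embedding. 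Averaging over $s$ and dropping the remainder gives $Q_S(\phi)$ by definition of the proxy. This step is routine. I will also note that, by \Cref{prop:bias-fold} and its affine analogue, the expansion describes the compiled network exactly, because folding does not change the function.

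For decomposition, suppose $H_{s,S}$ is block diagonal with blocks $H_{s,G}$ over a partition of $S$ into groups $G$. Suppose also that $\phi$ acts groupwise, i.e.\ $\phi_G$ depends on $A_K$ only. Then the quadratic and linear terms both split, giving $Q_S=\sum_G Q_G(\phi_G)$. The minimization over $\phi$ therefore separates, and the diagonal case gives per-unit scores.

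The special cases then follow by specialization.
\begin{enumerate}
\item \emph{Constants.} Take $\delta_{s,j}=c_j-A_{s,j}$ with diagonal curvature $h_{s,j}$. Minimizing the resulting quadratic in $c_j$ gives a curvature-weighted mean shifted by the gradient term.
\item \emph{Mean replacement and VBP.} Assume stationarity, meaning $\E_s g_{s,j}=0$ with $g$ uncorrelated with $a_j$ (or the gradient term neglected at a trained optimum), and assume uniform curvature $h_{s,j}\equiv h$. Then $c_j^\star=\bar a_j$, and the score becomes $\tfrac h2\Var[\mathbf a_j]$, which is VBP with mean folding.
\item \emph{Logit distortion.} Take $d(z,z')=\tfrac12\|z-z'\|^2$ around the clean logits. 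Then $g_s=0$, and $H_s=J_s^\top J_s$ is the Gauss--Newton matrix, where $J_s$ is the logit Jacobian in $a^{(\ell)}$. This gives $Q_S=\tfrac12\E_s\|J_{s,S}\delta_s\|^2$, the expected squared logit distortion to second order.
\item \emph{Affine/WLS.} Take $\phi=\beta+BA_K$ with diagonal or sample-independent weights. Then $Q_S$ is a weighted least-squares regression of $A_S$ on $[1,A_K]$, solved by normal equations.
\item \emph{Neuron merging.} Restrict $B$ to a single parent column scaled by a constant. This is the merging rule of Kim et al.
\end{enumerate}

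I expect the main obstacle to be making the stationarity assumptions behind the VBP reduction precise, since VBP ignores both the gradient term and curvature heterogeneity. I would state explicitly which conditions are needed: vanishing or uncorrelated gradient, and constant $h_{s,j}$ across samples and units. I would also check that the cross term $\E_s[g_{s,j}(c_j-A_{s,j})]$ indeed vanishes under them. A second point to handle carefully is that a logit-level discrepancy has zero gradient only when it is centred at the clean logits. For task-loss discrepancies, the gradient term must be retained.
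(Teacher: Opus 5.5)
Your proposal is correct and follows the paper's proof essentially step for step. Your $S$-restricted gradient and Hessian of $F_s$ are the paper's $\nabla r_s(A_{s,S})$ and $\nabla^2 r_s(A_{s,S})$ with $A_{s,K}$ frozen, and you then average, drop the remainder, split under block-diagonal curvature, and specialize; your mean-zero-plus-uncorrelated gradient condition for VBP is the weaker one the paper notes in its remark after \Cref{lem:mean}. Two small precisions: with $\mathbf g\neq 0$ the affine case is not a plain WLS of $A_S$ but has normal equations $\mathbf{\Phi}_P^\top D\mathbf{\Phi}_P\theta=\mathbf{\Phi}_P^\top(D\mathbf a-\mathbf g)$ as in the paper, and getting the CMR-Logit score $\Var[\mathbf a_j]\|W_{:,j}\|_2^2$ from your Gauss--Newton form needs the extra observation that an affine readout makes the diagonal curvature a sample-independent multiple of $\|W_{:,j}\|_2^2$.
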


For single-unit constant replacement $A^{(\ell)}_{s,j}\mapsto c$, the
proxy is minimized by
\begin{equation}
\label{eq:cstar-summary}
c^\star_{\ell,j}
=
\frac{\mathbf{h}^\top \mathbf{a}_j-\mathbf{1}_n^\top\mathbf{g}}
{\mathbf{1}_n^\top\mathbf{h}},
\end{equation}
provided $\mathbf{1}_n^\top\mathbf{h}>0$. Under samplewise gradient stationarity ($g_s=0$ for all $s$)
and uniform positive curvature, this reduces to mean replacement and
ranking by activation variance, recovering VBP. Under squared logit
distortion with downstream logits affine in unit $j$, the same theorem
gives $c^\star=\bar a_j$ and
$s^{\mathrm{logit}}_{\ell,j}
=\Var[\mathbf{a}_j]\,\|W_{:,j}\|_2^2$.
For affine replacement, the minimizer is a curvature-weighted least-squares
fit; for block-diagonal curvature, group scores add and a size-$k$
replacement set is selected by the bottom-$k$ scores. \Cref{tab:special-cases}
summarizes these cases and the corresponding methods.

The assumptions are local: the network is fixed, the
computation graph is deterministic and feedforward, replacements are
constants or affine functions of retained variables, downstream consumers
are affine when we claim exact folding, and diagonal or block-diagonal
curvature is used for scoring rather than for compilation or verification.

\subsection{Interchange-fidelity certificate}
\label{sec:margin}

Replacement risk is a cheap candidate score; causal-abstraction
verification asks whether the reduced model commutes with the original under
interchange interventions \citep{geiger2021causal,beckers2019abstracting}.
For an interchange intervention $I$ on retained coordinates, the state map
$\tau$ is projection onto retained activations and the intervention map
$\omega$ applies the corresponding low-level intervention. The following
certificate links expected logit distortion to IIA when the low-level
intervened model has margin.

\begin{theorem}[Margin-based interchange-fidelity certificate]
\label{thm:margin}
Let $z_L^I(x)$ denote the low-level network's logit vector under
interchange intervention $I$, and let $z_H^{\omega(I)}(x)$ denote the
compiled high-level model's logits under the corresponding intervention
$\omega(I)$. Define the interchange margin as
$m_I(x) := z_{L,y(x)}^I(x) - \max_{y' \neq y(x)} z_{L,y'}^I(x)$, where
$y(x) := \argmax_y z_{L,y}^I(x)$. Then for every $\epsilon > 0$,
\[
\Pr\bigl[\hat y_H^{\omega(I)}(x) \neq \hat y_L^I(x)\bigr]
\;\leq\;
\Pr\bigl[m_I(x) \le 2\epsilon\bigr]
\;+\;
\Pr\bigl[\|z_H^{\omega(I)}(x) - z_L^I(x)\|_\infty > \epsilon\bigr].
\]
By Markov's inequality, the second term is bounded by
$\E\bigl[\|z_H^{\omega(I)}(x) - z_L^I(x)\|_\infty^2\bigr] / \epsilon^2$,
which is upper-bounded by the expected squared logit distortion $D_2$ of the compiled model under the verification distribution.
\end{theorem}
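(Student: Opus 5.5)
The plan is a pointwise deterministic implication followed by a union bound. Fix $x$ and write $\Delta(x) := \|z_H^{\omega(I)}(x) - z_L^I(x)\|_\infty$. The core claim is that if $m_I(x) > 2\epsilon$ and $\Delta(x) \le \epsilon$, then $\hat y_H^{\omega(I)}(x) = \hat y_L^I(x) = y(x)$. Taking the contrapositive gives the event inclusion
\[
\{\hat y_H^{\omega(I)} \neq \hat y_L^I\} \subseteq \{m_I \le 2\epsilon\} \cup \{\Delta > \epsilon\},
\]
and the union bound then yields the stated inequality under whatever distribution over $(x, I)$ is used for verification.

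For the pointwise step, take any $y' \neq y(x)$. Each coordinate of $z_H^{\omega(I)}$ lies within $\epsilon$ of the corresponding coordinate of $z_L^I$. Therefore
\[
z_{H,y(x)}^{\omega(I)} - z_{H,y'}^{\omega(I)} \ge \bigl(z_{L,y(x)}^I - z_{L,y'}^I\bigr) - 2\epsilon \ge m_I(x) - 2\epsilon > 0.
\]
So $y(x)$ is the strict argmax of the high-level logits, and it equals $\hat y_L^I(x)$ by definition. On the event $m_I > 2\epsilon > 0$ the low-level argmax is unique, so tie-breaking causes no trouble. When $m_I = 0$ the sample already falls in the first event, which absorbs any tie-breaking convention.

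For the Markov step, apply Markov's inequality to the nonnegative variable $\Delta^2$:
\[
\Pr[\Delta > \epsilon] \le \Pr[\Delta^2 \ge \epsilon^2] \le \E[\Delta^2]/\epsilon^2.
\]
Since $\|v\|_\infty^2 \le \|v\|_2^2$, we get $\E[\Delta^2] \le \E\|z_H^{\omega(I)} - z_L^I\|_2^2$. This last quantity is the expected squared logit distortion $D_2$ of the compiled model, taken over the verification distribution of inputs and interchange interventions.

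The main obstacle is not analytic but definitional. We must make sure that $z_H^{\omega(I)}$ is a well-defined object that can be compared coordinatewise with $z_L^I$. Concretely, the compiled high-level model under $\omega(I)$ should be the network obtained by Proposition~\ref{prop:bias-fold} (or by its affine analogue), with the retained coordinates overwritten by the source-run values of the retained units. By bias folding, this equals the low-level network with the replaced units clamped, plus the same interchange on the retained coordinates. Its logits therefore live in the same $\mathbb{R}^q$ as $z_L^I$.

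We also need $D_2$ to be defined under exactly this intervened distribution, rather than the clean one, so that the final inequality is literal. If $D_2$ were measured only on clean inputs, the last step would need an additional assumption. I would state explicitly that $D_2$ is evaluated under the verification distribution, as the theorem does.
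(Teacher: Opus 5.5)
Your proof is correct and follows the paper's argument essentially step for step. You derive agreement of the argmaxes pointwise from $m_I(x)>2\epsilon$ and $\|z_H^{\omega(I)}(x)-z_L^I(x)\|_\infty\le\epsilon$, take the event inclusion with a union bound, and then apply Markov's inequality to the squared $\ell_\infty$ distortion, passing to $D_2$ via $\|\cdot\|_\infty^2\le\|\cdot\|_2^2$. Your extra remarks on tie-breaking, and on $D_2$ being measured under the interchange verification distribution, match how the paper defines $D_2$ and do not change the argument.
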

The CMR-Logit score estimates $D_2$ when the calibration
distribution matches the interchange marginal on the replaced units and
cross-unit distortion terms are accounted for; the experiments evaluate
the certificate with the empirically measured joint $D_2$.

Equivalently, if $D_2$ is the unnormalized expected squared logit
distortion and $M(t):=\Pr[m_I(x)\le t]$, then for every $\epsilon>0$,
$\mathrm{IIA}\ge 1-M(2\epsilon)-D_2/\epsilon^2$.
When $m_I(x)\ge\gamma$ almost surely, this gives
$\mathrm{IIA}\ge 1-4D_2/\gamma^2$. The margin term is the unavoidable
failure mode: near decision boundaries, no small logit-distortion guarantee
can certify class agreement.

\subsection{Reparameterization invariance}

ReLU networks admit exact function-preserving symmetries: for any $s>0$,
multiplying a hidden unit's activations by $s$ and dividing its outgoing
weights by $s$ leaves the network function unchanged. Variance is not
invariant to this transformation; expected squared logit distortion is invariant.

\begin{proposition}[ReLU scaling invariance of CMR-Logit]
\label{prop:relu-scaling-invariance}
Let $f_\theta$ be a feedforward ReLU network, and let $j$ be a hidden
unit in layer $\ell$ with post-ReLU activation $a_j(x)$ and outgoing
weight column $W_{\ell+1,:,j}$. For any $s>0$, scale the incoming
weights and bias of unit $j$ by $s$, so that its activation becomes
$a'_j(x)=s a_j(x)$ for all $x$, and scale its outgoing column by
$W'_{\ell+1,:,j}=s^{-1}W_{\ell+1,:,j}$. Then the network function is
unchanged for every input $x$. Under this transformation,
\[
\Var[a'_j]\cdot\|W'_{\ell+1,:,j}\|_2^2
=
\Var[a_j]\cdot\|W_{\ell+1,:,j}\|_2^2,
\]
so the CMR-Logit ranking is preserved exactly, up to global normalization
constants. By contrast, $\Var[a'_j]=s^2\Var[a_j]$; with independent
positive scalings across units, variance-based rankings can be changed
arbitrarily among units with nonzero variance.
\end{proposition}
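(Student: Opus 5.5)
The argument has three parts: function preservation, invariance of the logit score, and non-invariance of variance. All three follow from the positive homogeneity of ReLU, $\max(0,sz)=s\max(0,z)$ for $s>0$.

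\emph{Function preservation.} Scaling row $j$ of $W^{(\ell)}$ and entry $b^{(\ell)}_j$ by $s$ gives pre-activation $z'_j(x)=s z_j(x)$. By homogeneity, $a'_j(x)=s a_j(x)$ for every $x$, and all other units of layer $\ell$ are untouched. The consumer $z^{(\ell+1)}=W_{\ell+1}a^{(\ell)}+b^{(\ell+1)}$ receives the contribution $W'_{\ell+1,:,j}a'_j=s^{-1}W_{\ell+1,:,j}\,s\,a_j=W_{\ell+1,:,j}a_j$, so $z^{(\ell+1)}(x)$ is unchanged. Every downstream quantity depends on $a^{(\ell)}$ only through $z^{(\ell+1)}$, so by induction over later layers the network function is unchanged. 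If the network has other consumers of layer $\ell$, such as skip connections, the same column rescaling has to be applied to each of them; I would note this as a standing assumption of the feedforward setting.

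\emph{Invariance of the score.} The empirical mean scales linearly, $\bar a'_j=s\bar a_j$, so
\[
\Var[\mathbf a'_j]=n^{-1}\|s\mathbf a_j-s\bar a_j\mathbf 1_n\|_2^2=s^2\Var[\mathbf a_j],
\qquad
\|W'_{\ell+1,:,j}\|_2^2=s^{-2}\|W_{\ell+1,:,j}\|_2^2 .
\]
Multiplying the two, the factors of $s$ cancel. The scores of the other units in the layer are unchanged, since neither their activations nor their outgoing columns move. Applying this unit by unit with independent scalings $s_j>0$ leaves every score $s^{\mathrm{logit}}_{\ell,j}$ fixed, so the bottom-$k$ kept set is identical, with tie-breaking unaffected because the values are unchanged.

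A more intrinsic argument, which I would give as a remark, uses the interpretation of the score from \Cref{thm:cmr-risk}. There $s^{\mathrm{logit}}$ is the expected squared logit distortion of the mean replacement. Mean replacement commutes with scaling, $c'^\star=s\bar a_j$. After folding through the consumer, this yields exactly the same intervened function, so the distortion is a property of the function alone. This observation is also what the phrase ``up to global normalization constants'' should cover: any normalization applied uniformly to all units leaves rankings unchanged.

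\emph{Non-invariance of variance.} Take units $j\neq k$ with $\Var[a_j],\Var[a_k]>0$. Choosing $s_j^2>\Var[a_k]/\Var[a_j]$ with $s_k=1$ makes $j$ outrank $k$, and the reverse choice makes $k$ outrank $j$. More strongly, for any target permutation $\pi$ of the nonzero-variance units, set $s_j=t^{\pi(j)}/\sqrt{\Var[a_j]}$ for any $t>1$. Then $\Var[a'_j]=t^{2\pi(j)}$ realizes $\pi$ exactly, while each scaled network remains functionally identical by the first step.

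The only mildly delicate point is the scope of ``functionally identical''. It relies on $s>0$ and on the unit feeding only affine consumers through $W_{\ell+1}$. I expect that scoping assumption, rather than any calculation, to be the main thing to state carefully.
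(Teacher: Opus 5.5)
Your proposal is correct and follows the paper's own proof essentially step by step. Function preservation comes from positive homogeneity of ReLU, the $s^2$ and $s^{-2}$ factors cancel in the product score, and independent per-unit scalings are used to reorder variances. Your explicit choice $s_j=t^{\pi(j)}/\sqrt{\Var[a_j]}$ just makes concrete the ``suitable choice of scalings'' that the paper asserts without writing down.
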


\section{Experiments}
\label{sec:experiments}

We organize the empirical evaluation around three claims: (1)
coordinate-invariance separates
methods on functionally identical networks (\Cref{sec:exp-invariance});
(2) CMR is competitive on a modern transformer pruning benchmark,
where matched fine-tuning makes top-1 accuracy a saturated recovery metric
(\Cref{sec:exp-benchmark}); and (3) the compiled reductions behave as
approximate abstractions under interchange interventions
(\Cref{sec:exp-verification}). We close with diagnostics that show when
local scores are reliable and when retraining or recomputation dominates
the ranking signal.

\paragraph{Methods.}
We evaluate three variants of CMR (CMR-Const, CMR-Logit, CMR-Affine)
and compare them with variance-based pruning \citep{berisha2025vbp}, magnitude pruning,
and the structural-pruning baseline DepGraph \citep{fang2023depgraph}. A
random selector serves as an unstructured recovery reference.
\Cref{tab:special-cases} maps the methods and related baselines to
special cases of \Cref{thm:cmr-risk}.

\paragraph{Compute.}
DeiT-Tiny experiments use a single NVIDIA RTX 4500 Ada (24 GB VRAM,
CUDA 13.0); all reported throughput numbers are measured on this device.
ConvNet and ResNet experiments use cached checkpoints and run on CPU,
which is sufficient for these smaller workloads and ensures the
interchange-intervention budget  is held
constant across methods within each experiment ($R = 2000$ swaps
per cell for verification and the off-diagonal diagnostic, $R = 1000$
for the affine ablation, $R = 500$ for the calibration-shift study).

\paragraph{Metrics.}
Compression experiments report retained fraction $\rho$, top-1/top-5
accuracy after any matched fine-tuning, parameters, multiply-accumulate
operations (\textbf{MACs}), and measured throughput. Abstraction
experiments report interchange-intervention accuracy (IIA;
higher is better) and KL divergence between the original and reduced
intervention logits (lower is better). The invariance stress test reports
the Jaccard overlap between the kept sets selected before and after an
exact function-preserving reparameterization.

\begin{figure}[t]
\centering
\includegraphics[width=\linewidth]{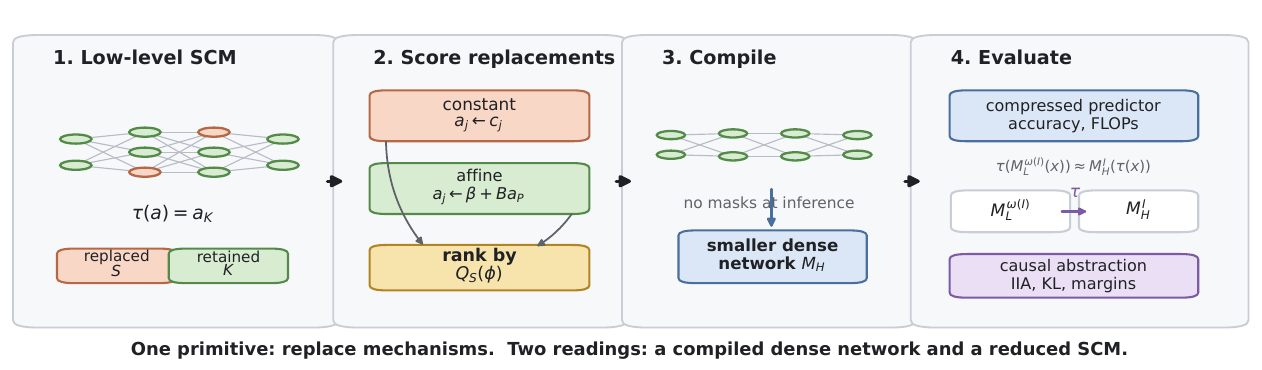}
\caption{\textbf{Mechanism replacement overview.} A low-level network is
viewed as a deterministic SCM over activations. A retained set $K$ defines
the projection $\tau$, while replaced units $S$ are assigned constant or
affine mechanisms that fold into a smaller dense network. We evaluate both
compiled task behavior and commutativity under interchange interventions.}
\label{fig:overview}
\end{figure}

\begin{table}[t]
\centering
\caption{Known structured-pruning and abstraction-discovery methods as
special cases of the unified replacement-risk objective
(\Cref{thm:cmr-risk}). Replacement objective $\times$ replacement class
$\times$ score formula $\times$ method recovered.}
\label{tab:special-cases}
\footnotesize
\setlength{\tabcolsep}{3pt}
\begin{tabular}{@{}llll@{}}
\toprule
Replacement objective & Replacement class & Score formula & Method recovered \\ \midrule
uniform curvature, samplewise stationarity & constant & $\mathrm{Var}(a_j)$ & mean replacement \citep{evci2018mean} / VBP \citep{berisha2025vbp} \\
squared logit distortion & constant & $\mathrm{Var}(a_j)\,\|W_{:,j}\|_2^2$ & Logit-MSE / CMR-Logit \\
supervised CE local risk & constant & curvature-weighted score & CMR-Const \\
supervised / logit local risk & affine & weighted residual & CMR-Affine, neuron merging \citep{DBLP:conf/nips/KimKPJ20} \\
next-layer reconstruction & affine / full set & reconstruction residual & submodular pruning \citep{elhalabi2022data} \\
\bottomrule
\end{tabular}
\end{table}
\subsection{Reparameterization-invariance stress test}
\label{sec:exp-invariance}

\Cref{prop:relu-scaling-invariance} predicts that CMR-Logit rankings are
unchanged under exact ReLU positive scaling, while VBP rankings depend on
the absolute activation scale. This is the controlled setting in which method
choice matters most: the paired networks compute the same
function, so a changed kept set reflects coordinate dependence rather than
behavioral structure. The two cached models are a small
convolutional CIFAR-10 classifier with a $256$-unit penultimate
representation and a ResNet-56 trained on CIFAR-100 whose post-GAP
representation has $64$ channels; both are scored at the
penultimate-representation--head interface. For each seed, unit-wise scalings
$s_j \sim \mathrm{LogUniform}(s_{\min}, s_{\max})$ are applied as
$a_j \mapsto s_j a_j$, $W_{:,j} \mapsto W_{:,j}/s_j$ at the
representation--head interface, i.e., the transformation that an exact
ReLU rescaling of unit $j$ induces on the cached activations and head
weights; the maximum logit
difference between original and rescaled models is below $10^{-5}$ in
every cell, a numerical sanity check on the identity.
Each cell keeps half the units ($128$ of $256$ on the ConvNet, $32$ of
$64$ on ResNet-56), with ten scaling draws over five (ConvNet) and
three (ResNet-56) trained checkpoints.

On the CIFAR-10 ConvNet \citep{krizhevsky2009learning} at $[0.01, 100]$, CMR-Logit has kept-set Jaccard
$1.000 \pm 0.000$ across ten seeds while VBP has Jaccard
$0.346 \pm 0.021$ (\Cref{tab:e2-invariance-summary}; \Cref{fig:e2-invariance}),
statistically indistinguishable from the chance floor: the expected
Jaccard of two independent random half-subsets is $\approx 1/3$, and the
measured random selector gives $0.333 \pm 0.030$. At the narrower
$[0.1, 10]$, CMR-Logit remains exactly invariant and VBP remains
scale-dependent. The same pattern appears on ResNet-56 \citep{he2016deep} /
CIFAR-100, where CMR-Logit is again exactly invariant and VBP has
Jaccard $0.374 \pm 0.037$ at $[0.01, 100]$ (random floor
$0.356 \pm 0.092$), so under the strongest scaling range VBP's
kept-set overlap falls to chance level. This is not merely label instability:
at keep $=128$ on the CIFAR-10 ConvNet under the strongest
scaling range, the downstream CMR-Logit IIA gap over VBP is $+0.131$ and
the KL gap is $-0.362$.

\begin{table}[t]
\centering
\caption{\textbf{Kept-set stability under exact ReLU reparameterization.}
Jaccard overlap is measured between the kept set selected on the original
network and the kept set selected on a functionally identical
positively-scaled network (mean $\pm$ sd over ten seeds); bold marks
the best Jaccard in each row. Each kept set retains half the
units; the chance floor for two independent random half-subsets is
$\approx 1/3$ (measured random selector: $0.333 \pm 0.030$ ConvNet,
$0.356 \pm 0.092$ ResNet-56).}
\label{tab:e2-invariance-summary}
\small
\begin{tabular}{@{}llccc@{}}
\toprule
Model & Scale range & CMR-Logit & VBP & magnitude \\ \midrule
CIFAR-10 ConvNet & $[0.01,100]$ & $\mathbf{1.000 \pm 0.000}$ & $0.346 \pm 0.021$ & $0.359 \pm 0.027$ \\
CIFAR-10 ConvNet & $[0.1,10]$ & $\mathbf{1.000 \pm 0.000}$ & $0.371 \pm 0.024$ & $0.378 \pm 0.027$ \\
ResNet-56 / CIFAR-100 & $[0.01,100]$ & $\mathbf{1.000 \pm 0.000}$ & $0.374 \pm 0.037$ & $0.352 \pm 0.049$ \\
ResNet-56 / CIFAR-100 & $[0.1,10]$ & $\mathbf{1.000 \pm 0.000}$ & $0.389 \pm 0.038$ & $0.361 \pm 0.052$ \\
\bottomrule
\end{tabular}
\end{table}

\begin{figure}[t]
\centering
\includegraphics[width=.85\linewidth]{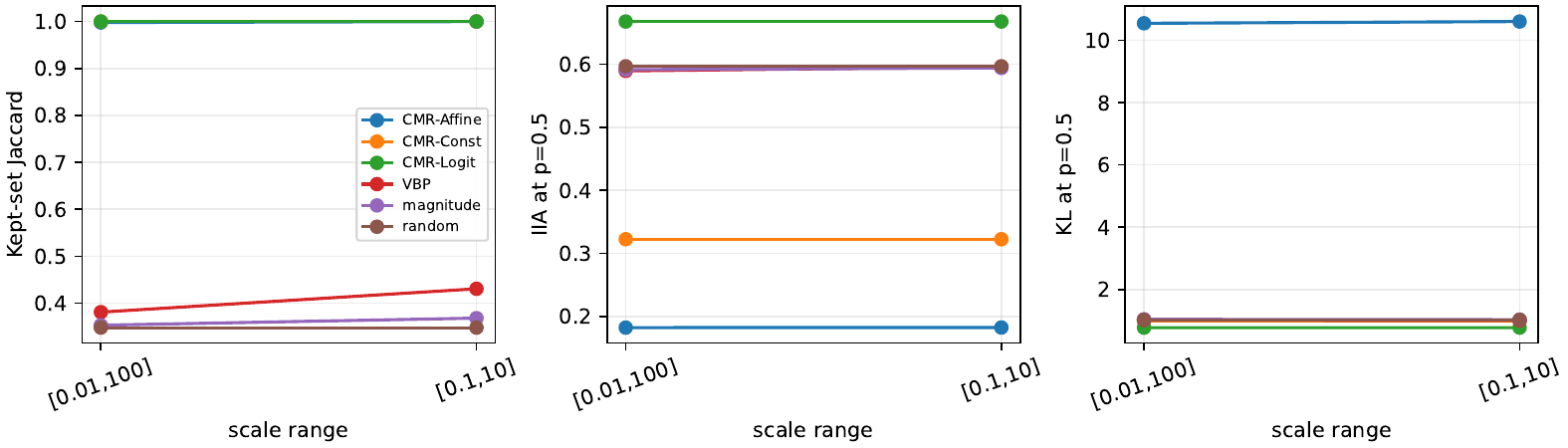}
\caption{\textbf{Reparameterization stress test.} CMR-Logit is invariant
under exact positive scaling, while VBP and magnitude rankings change
substantially across functionally identical networks. Curves pool
three model populations (CIFAR-10 ConvNet, ResNet-56 / CIFAR-100, and an
untrained ResNet-20 control), so plotted values differ from the
per-model rows of \Cref{tab:e2-invariance-summary}; the random
selector's spread reflects independent re-draws, not scale dependence.}
\label{fig:e2-invariance}
\end{figure}

\subsection{Modern pruning benchmark}
\label{sec:exp-benchmark}

We initialize DeiT-Tiny from ImageNet-1K pretrained weights
\citep{touvron2021training} and fine-tune it for five epochs on
ImageNet-100 \citep{tian2020contrastive} (the standard 100-class subset of ImageNet-1K \citep{russakovsky2015imagenet}, $\approx
127$K train / $5$K val) to a baseline of top-1 / top-5 $84.62 / 97.32$.
We then prune the final transformer block's feed-forward (\textbf{FFN})
intermediate units at keep fractions $\{0.75, 0.50, 0.25\}$ using each
scoring method and apply a matched ten-epoch fine-tune (identical
optimizer, schedule, and initialization from the same baseline
checkpoint; mini-batch order varies per cell). Scores are computed from
class-token activations of the final-block FFN on a fixed calibration
set of $n = 1024$ images, using the block's down-projection output as a
local logit surrogate for the CMR variants; the pruned projections act
on all tokens at deployment. Each cell is a single seed.
Throughput is measured in half precision over five timed sweeps of $1024$
validation images after two warmup sweeps; evaluation uses a direct
$224 \times 224$ resize, applied identically to baseline and pruned
models.

After fine-tuning, all six methods finish within $0.7$ top-1 points of the
baseline (\Cref{tab:e1-imagenet100-grid,fig:e1-pruning}; pruned models
$5.32$--$5.47$M parameters, $1.04$--$1.06$G MACs, throughput
$7.4$--$7.8$K~img/s on the RTX~4500 Ada). Before fine-tuning, the
picture is sharper: CMR-Logit, VBP, magnitude, and random preserve
baseline accuracy zero-shot (top-1 $83.2$--$84.6$), CMR-Const drops
moderately ($71.6$--$76.1$), and CMR-Affine collapses to near-chance
($2.6$--$4.4$); the matched fine-tune recovers every method, which is
precisely why the post-fine-tuning grid saturates. This makes the benchmark a
recovery and deployment-viability test rather than a statistically
reliable ranking test. Among non-random selectors, CMR-Logit is highest
at keep $=0.75$ ($85.16$ vs.\ VBP $84.50$), while CMR-Affine is highest
at keep $=0.50$ and $0.25$ ($85.04$ and $85.16$, against VBP's $85.00$
and $84.70$). CMR-Const also slightly exceeds VBP at the most aggressive
setting ($84.96$ vs.\ $84.70$).

Random pruning followed by matched
fine-tuning is competitive, including the best single cell
in the grid ($85.30$ at keep $=0.25$), suggesting that the final
DeiT-Tiny FFN has enough redundant expansion capacity for ten epochs of
fine-tuning to recover from many reasonable selections. We therefore use
ImageNet-100 to show that CMR variants remain competitive with VBP
on a modern transformer, and use the invariance stress test
(\Cref{sec:exp-invariance}) for the clear method-choice separation.

\begin{table}[t]
\centering
\caption{\textbf{ImageNet-100 top-1 after matched fine-tuning.}
DeiT-Tiny final-block FFN pruning at retained fraction $\rho$; baseline
top-1 is $84.62$; bold marks the best top-1 in each column.}
\label{tab:e1-imagenet100-grid}
\small
\begin{tabular}{@{}lccc@{}}
\toprule
Method & $\rho=0.75$ & $\rho=0.50$ & $\rho=0.25$ \\ \midrule
CMR-Logit & $\mathbf{85.16}$ & $84.78$ & $84.88$ \\
CMR-Affine & $84.70$ & $\mathbf{85.04}$ & $85.16$ \\
CMR-Const & $84.66$ & $84.66$ & $84.96$ \\
VBP & $84.50$ & $85.00$ & $84.70$ \\
magnitude & $84.70$ & $84.78$ & $84.62$ \\
random & $84.68$ & $84.82$ & $\mathbf{85.30}$ \\
\bottomrule
\end{tabular}
\end{table}

On cached ResNet-56 / CIFAR-100 without fine-tuning, CMR-Logit and VBP
prune the post-GAP representation; DepGraph physically removes
stage-three residual channels and propagates through to the classifier,
with channels ranked by classifier-input column norm and the dependency
graph used for propagation (single seed, no fine-tuning).
At keep $0.75/0.50/0.25$, CMR-Logit top-1 is $0.670/0.534/0.284$, VBP
$0.672/0.526/0.286$, DepGraph $0.253/0.065/0.020$. DepGraph gives
larger parameter and MAC reductions but at a different deployment
budget; a matched-budget fine-tuned comparison under DepGraph's
native group-norm importance is left to future work.

\subsection{Interventional self-abstraction verification}
\label{sec:exp-verification}

We verify compiled reductions under $R = 2000$ Bernoulli interchange
interventions at $p = 0.5$ on cached CIFAR-10 ConvNet and ResNet-56 /
CIFAR-100 checkpoints (\Cref{tab:e3-verification-summary,fig:e3-verification};
five seeds per cell, over five ConvNet and three ResNet-56 trained
checkpoints).
On the CIFAR-10 ConvNet, CMR-Logit lies on the interchange-fidelity /
compression frontier with IIA/KL $0.858/0.082$ at keep $0.75$,
$0.675/0.431$ at keep $0.50$, and $0.483/1.034$ at keep $0.25$. VBP is
close behind on cell means at the first two keep fractions,
though the paired per-seed IIA deltas are small ($+0.007 \pm 0.016$ at
keep $0.75$, $+0.009 \pm 0.017$ at keep $0.50$) and not individually
significant, and
ResNet-56 / CIFAR-100 shows the same small CMR-Logit edge at keep $0.75$
and $0.50$. A powered MNIST \citep{lecun1998mnist} baseline with ten-seed paired bootstrap CIs
(Appendix~\ref{app:mnist-baseline}) matches this pattern
qualitatively: the IIA edge is small and not significant, while the KL
edge is significant; a
Boolean-circuit sanity check (Appendix~\ref{app:boolean}) confirms
CMR-Logit recovers the ground-truth compositional structure.

CMR-Const shows a useful failure mode for local risk scores. When each
replaced unit's optimal constant is fitted independently, simultaneously
folding the constants ignores cross-unit curvature interactions. Jointly
refitting the constants over the replaced block at compile time mitigates
the worst interaction; the corrected CMR-Const IIA on the CIFAR-10
ConvNet is $0.720/0.582/0.455$ at keep $0.75/0.50/0.25$. This is below
CMR-Logit but above the uncorrected constant-replacement failure mode,
and it matches the off-diagonal curvature diagnostic in
\Cref{sec:exp-ablations}. CMR-Affine is unstable in this
no-fine-tuning regime (ConvNet IIA $0.388/0.282/0.217$, below the random
selector's $0.723/0.538/0.378$); we quantify and discuss this in
\Cref{sec:limitations}.

\begin{table}[t]
\centering
\caption{\textbf{Interventional verification summary.} Each cell reports
IIA/KL for compiled reductions, means over five seeds; higher IIA and lower KL are better.
Bold marks per-model, per-column winners for each metric.}
\label{tab:e3-verification-summary}
\small
\begin{tabular}{@{}llccc@{}}
\toprule
Model & Method & $\rho=0.75$ & $\rho=0.50$ & $\rho=0.25$ \\ \midrule
CIFAR-10 ConvNet & CMR-Logit & $\mathbf{0.858/0.082}$ & $\mathbf{0.675/0.431}$ & $\mathbf{0.483/1.034}$ \\
CIFAR-10 ConvNet & VBP & $0.850/0.087$ & $0.666/0.447$ & $0.456/1.077$ \\
CIFAR-10 ConvNet & CMR-Const & $0.720/0.356$ & $0.582/0.730$ & $0.455/1.185$ \\
ResNet-56 / CIFAR-100 & CMR-Logit & $\mathbf{0.628/0.697}$ & $\mathbf{0.369/1.842}$ & $\mathbf{0.159}/3.232$ \\
ResNet-56 / CIFAR-100 & VBP & $0.620/0.714$ & $0.361/1.880$ & $0.152/\mathbf{3.224}$ \\
ResNet-56 / CIFAR-100 & CMR-Const & $0.538/0.944$ & $0.303/2.207$ & $0.128/3.436$ \\
\bottomrule
\end{tabular}
\end{table}

\begin{figure}[t]
\centering
\begin{subfigure}[t]{.42\linewidth}
\centering
\includegraphics[width=\linewidth]{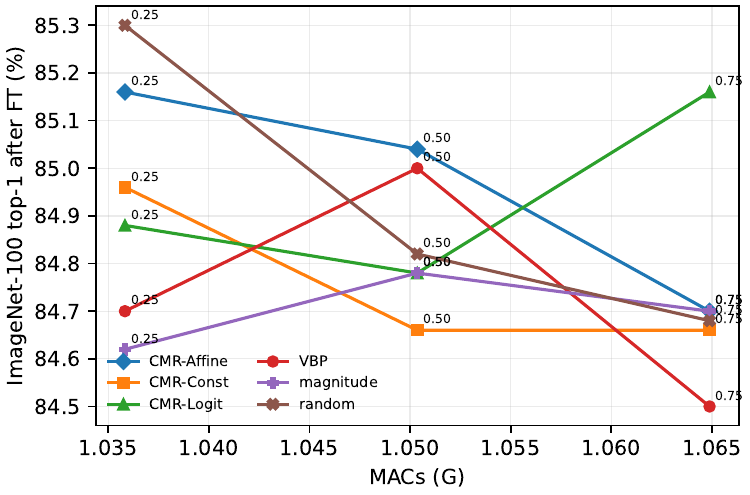}
\caption{\textbf{ImageNet-100 pruning.} Top-1 after matched fine-tuning
versus MACs; labels mark the FFN keep fraction.}
\label{fig:e1-pruning}
\end{subfigure}
\hfill
\begin{subfigure}[t]{.54\linewidth}
\centering
\includegraphics[width=\linewidth]{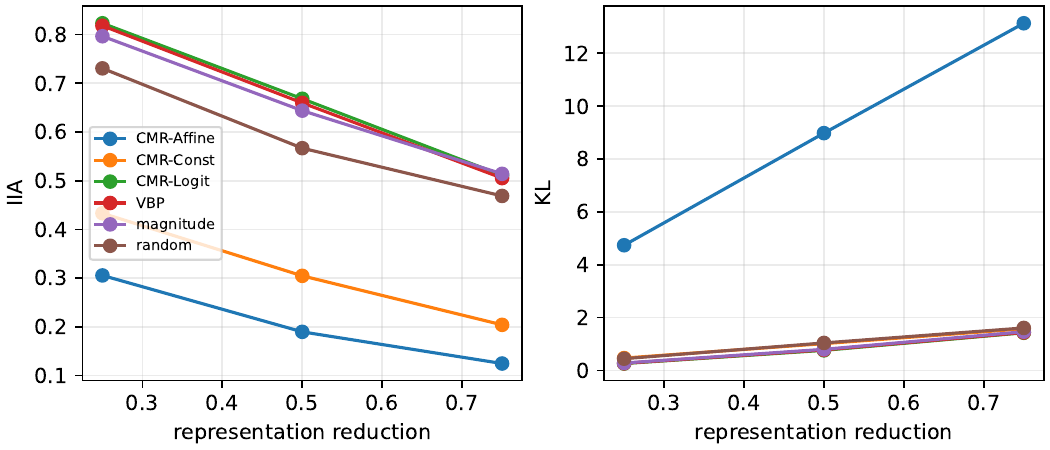}
\caption{\textbf{Interventional verification.} IIA and KL under
interchange interventions; the x-axis is representation reduction
$1-\rho$. Curves pool the CIFAR-10 ConvNet, ResNet-56 /
CIFAR-100, and an untrained ResNet-20 control; per-model values are in
\Cref{tab:e3-verification-summary}. The CMR-Affine curve shows the
no-fine-tuning instability discussed in \Cref{sec:limitations}.}
\label{fig:e3-verification}
\end{subfigure}
\caption{\textbf{Pruning and abstraction behavior.} The ImageNet-100
benchmark establishes modern-transformer recovery under matched
fine-tuning, while interchange verification tests whether compiled
reductions behave as approximate high-level causal models.}
\label{fig:experiment-panels}
\vspace{-0.75em}
\end{figure}

\subsection{Ablations and diagnostics}
\label{sec:exp-ablations}
\vspace{-0.45em}

The ablations isolate three sources of variation that could otherwise
be mistaken for method effects. First, calibration sensitivity is modest
for CMR-Logit. Under a class-subset calibration shift (calibrate on
classes $0$--$4$, evaluate on $1{,}024$ held-out CIFAR-10
test images under $R=500$ interchange interventions), CMR-Logit has
KL $0.472/0.480$ at calibration sizes $n=500/2000$, compared with
$0.616/0.599$ for VBP. CMR-Affine is more data-dependent in the cached
full-distribution sweep: at keep $128$, test accuracy rises from
$86.37$ at $n=500$ to $87.25$ at $n=10{,}000$, explaining why affine
replacement is fragile in small calibration regimes; on ImageNet-100 the compiled
affine model itself collapses zero-shot, and its recovery there is
attributable to the matched fine-tune (\Cref{sec:exp-benchmark}).
\vspace{-0.45em}

Second, affine parent selection is not the bottleneck in the small
ConvNet setting. Pearson, output-weight-aware Pearson, and random parent
sets have nearly identical mean IIA at keep $0.5$ (Appendix~\ref{app:ablations}),
while larger parent sets without sufficient regularization can worsen
KL. Third, the local one-shot approximation is stressed by
cross-unit curvature: the mean off-diagonal Hessian-mass ratio across
random 32-unit blocks of the 256-dimensional penultimate
CIFAR-10 ConvNet representation is $0.849$. In that setting, iterative
recomputation of the CMR-Const scores improves IIA from $0.584$ to $0.614$ at keep $0.5$ and
reduces KL from $0.710$ to $0.597$. Appendix~\ref{app:timing} reports
the corresponding score, compile, and verify wall-clock breakdown.
\vspace{-0.55em}

\section{Discussion and Limitations}
\label{sec:limitations}
\vspace{-0.55em}

The empirical results distinguish three questions that are often conflated
in pruning papers. ImageNet-100 asks whether mechanism replacement remains
viable in a modern transformer under matched fine-tuning; it does, but the
top-1 metric is saturated enough that method rankings are not reliable.
The reparameterization stress test asks whether a score respects the
function computed by the network rather than the coordinates used to
represent it; here the separation is sharp. Interchange verification asks
whether the compiled reduction behaves like an approximate high-level
causal model; this is where the abstraction interpretation becomes
testable rather than metaphorical.
\vspace{-0.45em}

This perspective also clarifies CMR's contribution. The primitive is not
``delete a unit'', but ``replace a mechanism and measure the behavioral
cost.'' Different choices of replacement class and discrepancy recover
variance-based pruning, logit-distortion scoring, constant replacement,
affine merging, and constructive causal abstraction as points in the same
design space. The benefit is that the assumptions behind each score become explicit and can be
tested against invariance, compilation, and interchange fidelity; no single score dominates every benchmark.
\vspace{-0.45em}

\paragraph{Locality and curvature.}
The unified replacement-risk theorem of \Cref{thm:cmr-risk} is stated for
a single layer or block. Multi-layer iterative reduction requires
recomputing activations after each replacement and re-scoring downstream
layers; our off-diagonal diagnostic shows why this matters. The diagonal
or block-diagonal curvature approximation makes scoring cheap and
additive, but cross-unit curvature matters. Iterative recomputation is the
natural extension when the one-shot approximation is too coarse, and our
diagnostic results show that it can improve interchange fidelity.
\vspace{-0.45em}

\paragraph{Replacement capacity.}
Affine replacement is more expressive than constant replacement, but the
extra capacity is not automatically useful. On small cached vision
models it can overfit the calibration signal or optimize a metric that
does not translate into low interchange KL. With matched
fine-tuning, as in the DeiT-Tiny / ImageNet-100 experiment, the same
replacement class can recover and become competitive, though the
recovery there is attributable to the fine-tune itself, which also
rescues random pruning, and the compiled affine model is near-chance
zero-shot. We therefore view
CMR-Affine as a data- and metric-dependent option rather than a uniformly
preferred selector.
\vspace{-0.45em}

\paragraph{Causal semantics.}
The causal model in this paper is a computational structural causal model
over neural activations. Its interventions are interventions on the
network computation graph, and its abstraction claim is commutativity
under a fixed state map and corresponding interchange interventions
\citep{beckers2019abstracting,geiger2021causal}. We do not claim that the
reduced model recovers exogenous real-world causal variables. It is a
reduced neural SCM whose variables may support interpretation, but whose
formal guarantee concerns computational behavior.
\vspace{-0.65em}

\section{Conclusion}
\label{sec:conclusion}
\vspace{-0.55em}

We introduced \emph{causal mechanism reduction}, a mechanism-replacement
view of neural network reduction that unifies structured pruning and
constructive causal abstraction. The unified replacement-risk theorem
recovers several existing scores as special cases, the folding results
make the reductions deployable as smaller dense networks, and the
margin-based certificate connects logit distortion to interchange
fidelity. Empirically, the main lesson is that coordinate dependence
is not a cosmetic issue: functionally identical ReLU networks can induce
different variance-based reductions, while logit-distortion scoring is
invariant and preserves better interchange behavior. Mechanism
replacement therefore offers both compression and causal-abstraction
research a common object to optimize, compile, and verify.

\bibliographystyle{plainnat}
\bibliography{main}

\clearpage \newpage
\appendix
\section{Proofs and Extended Derivations}
\label{app:deferred}

\paragraph{Roadmap.}
This appendix separates the load-bearing derivations from optional
extensions. Appendix~\ref{app:taylor} derives the single-unit quadratic
proxy, Appendix~\ref{app:special} recovers mean replacement and VBP as
special cases, Appendix~\ref{app:multi} gives the multi-unit additivity
and off-diagonal diagnostic, and Appendix~\ref{app:remainder} records
the Taylor-remainder controls. Appendix~\ref{app:cmr-proofs} then gives
the proofs for the main-text theorem, certificate, folding claim, and
invariance claim. Appendix~\ref{app:attention} is an optional grouped
mechanism extension for attention heads; it is not used by the main
experiments.

\subsection{Quadratic Proxy for Interventional Risk}
\label{app:taylor}

Exactly evaluating the interventional risk $L_{\ell,j}(c)$ requires a forward pass through the modified network for each candidate constant $c$. We now derive a computationally efficient second-order approximation that admits closed-form optimization over $c$ and yields an interpretable unit importance score.

\subsubsection{Single-Unit Constant Intervention}

Fix layer $\ell$ and unit $j$. The hard intervention $\dop(a^{(\ell)}_j := c)$ replaces the stochastic activation vector $\mathbf{a}_j \in \mathbb{R}^n$ (across calibration samples) with the constant vector $c\mathbf{1}_n$. Define the induced perturbation:
\begin{equation}
\label{eq:delta}
\boldsymbol{\delta}(c) \;:=\; c\mathbf{1}_n - \mathbf{a}_j \;\in\; \mathbb{R}^n.
\end{equation}

\paragraph{Samplewise sensitivity.}
Let $L_s := \ell(f_\theta(x_s), y_s)$ denote the loss on sample $s$. We define the per-sample gradient and curvature with respect to the scalar activation $A^{(\ell)}_{s,j}$:
\begin{equation}
\label{eq:g-def}
g_s \;:=\; \frac{\partial L_s}{\partial A^{(\ell)}_{s,j}}, \qquad
h_s \;:=\; \frac{\partial^2 L_s}{\partial (A^{(\ell)}_{s,j})^2},
\end{equation}
and collect these into vectors $\mathbf{g}, \mathbf{h} \in \mathbb{R}^n$. Since the empirical risk decomposes as a sum over independent samples, the Hessian of $L(\theta)$ with respect to the activation column $\mathbf{a}_j$ is diagonal:
\begin{equation}
\label{eq:hessian-diagonal}
\frac{\partial^2 L}{\partial \mathbf{a}_j \partial \mathbf{a}_j^\top} \;=\; \frac{1}{n}\mathrm{Diag}(\mathbf{h}).
\end{equation}

\paragraph{Second-order proxy.}
Expanding $L_{\ell,j}(c)$ to second order around the observed activations yields:
\begin{equation}
\label{eq:taylor-proxy}
\boxed{
\Delta_{\ell,j}(c)
\;:=\;
L_{\ell,j}(c) - L(\theta)
\;\approx\;
\frac{1}{n} \mathbf{g}^\top \boldsymbol{\delta}(c)
\;+\;
\frac{1}{2n}\,\boldsymbol{\delta}(c)^\top \mathrm{Diag}(\mathbf{h})\,\boldsymbol{\delta}(c).
}
\end{equation}
This quadratic proxy is exact when the loss is quadratic in the activations; otherwise, the approximation error is controlled by the magnitude of third-order derivatives (Appendix~\ref{app:remainder}).

\subsubsection{Optimal Constant and Importance Score}

We now derive closed-form expressions for the optimal intervention constant and the resulting minimal loss increase.

\begin{proposition}[Optimal intervention constant]
\label{prop:optimal-c}
Assume $\mathbf{1}_n^\top \mathbf{h} > 0$ (positive total curvature). The unique minimizer of the quadratic proxy \eqref{eq:taylor-proxy} over $c \in \mathbb{R}$ is:
\begin{equation}
\label{eq:cstar}
c^\star_{\ell,j}
\;=\;
\frac{\mathbf{h}^\top \mathbf{a}_j - \mathbf{1}_n^\top \mathbf{g}}{\mathbf{1}_n^\top \mathbf{h}}
\;=\;
\underbrace{\frac{\sum_{s=1}^n h_s \cdot A^{(\ell)}_{s,j}}{\sum_{s=1}^n h_s}}_{\text{curvature-weighted mean}}
\;-\;
\underbrace{\frac{\sum_{s=1}^n g_s}{\sum_{s=1}^n h_s}}_{\text{gradient correction}}.
\end{equation}
\end{proposition}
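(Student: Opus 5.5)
The plan is to treat the proxy \eqref{eq:taylor-proxy} as a scalar quadratic in $c$ and minimize it directly. Since $\boldsymbol{\delta}(c)=c\mathbf{1}_n-\mathbf{a}_j$ is affine in $c$, the proxy is
\[
q(c)=\frac{1}{n}\sum_{s=1}^n\Bigl[g_s\,(c-A^{(\ell)}_{s,j})+\tfrac12 h_s\,(c-A^{(\ell)}_{s,j})^2\Bigr].
\]
I will use the samplewise sum form rather than matrix notation, because $\mathrm{Diag}(\mathbf{h})$ turns the quadratic term into $\sum_s h_s\delta_s^2$.

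First, I expand $q$ in powers of $c$. The coefficient of $c^2$ is $\frac{1}{2n}\mathbf{1}_n^\top\mathbf{h}$. The coefficient of $c$ is $\frac{1}{n}\bigl(\mathbf{1}_n^\top\mathbf{g}-\mathbf{h}^\top\mathbf{a}_j\bigr)$. The constant term does not affect the minimizer.

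Second, I use the hypothesis $\mathbf{1}_n^\top\mathbf{h}>0$. It makes the leading coefficient strictly positive, so $q$ is a strictly convex parabola. A strictly convex parabola has exactly one critical point, and that point is its unique global minimizer over $\mathbb{R}$.

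Third, I solve the first-order condition
\[
q'(c)=\frac{1}{n}\bigl[(\mathbf{1}_n^\top\mathbf{h})\,c+\mathbf{1}_n^\top\mathbf{g}-\mathbf{h}^\top\mathbf{a}_j\bigr]=0,
\]
which gives $c^\star_{\ell,j}=(\mathbf{h}^\top\mathbf{a}_j-\mathbf{1}_n^\top\mathbf{g})/\mathbf{1}_n^\top\mathbf{h}$. Writing the inner products as sums gives the displayed split into the curvature-weighted mean $\sum_s h_sA^{(\ell)}_{s,j}/\sum_s h_s$ minus the gradient correction $\sum_s g_s/\sum_s h_s$.

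There is no real obstacle here. The only point needing care is uniqueness. Individual $h_s$ may be negative, so the Hessian $\mathrm{Diag}(\mathbf{h})$ need not be positive semidefinite in $\mathbb{R}^n$. However, the optimization is only over the one-dimensional line $c\mapsto c\mathbf{1}_n-\mathbf{a}_j$, and the curvature along that line is $\mathbf{1}_n^\top\mathbf{h}$. That is why the aggregate condition $\mathbf{1}_n^\top\mathbf{h}>0$ is exactly the right hypothesis for existence and uniqueness.

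As an optional final step, substituting $c^\star$ back into $q$ gives the minimal proxy value, which serves as the importance score.
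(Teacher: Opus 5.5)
Your proposal is correct and takes essentially the same route as the paper. Both treat the proxy as a scalar quadratic in $c$, solve the first-order condition, and use the constant second derivative $\mathbf{1}_n^\top\mathbf{h}>0$ to certify the unique global minimizer; your remark that only the aggregate curvature along the line $c\mapsto c\mathbf{1}_n-\mathbf{a}_j$ matters (not the sign of each $h_s$) makes explicit a point the paper leaves implicit.
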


\begin{proof}
Define the scaled objective $Q(c) := n \cdot \Delta_{\ell,j}(c)$. Substituting $\boldsymbol{\delta}(c) = c\mathbf{1}_n - \mathbf{a}_j$:
\[
Q(c) \;=\; \mathbf{g}^\top(c\mathbf{1}_n - \mathbf{a}_j) + \frac{1}{2}(c\mathbf{1}_n - \mathbf{a}_j)^\top \mathrm{Diag}(\mathbf{h})(c\mathbf{1}_n - \mathbf{a}_j).
\]
The first-order condition $Q'(c) = 0$ gives:
\[
\mathbf{1}_n^\top \mathbf{g} + c \cdot \mathbf{1}_n^\top \mathbf{h} - \mathbf{h}^\top \mathbf{a}_j = 0
\quad \Longrightarrow \quad
c^\star = \frac{\mathbf{h}^\top \mathbf{a}_j - \mathbf{1}_n^\top \mathbf{g}}{\mathbf{1}_n^\top \mathbf{h}}.
\]
The second derivative $Q''(c) = \mathbf{1}_n^\top \mathbf{h} > 0$ confirms this is a minimum.
\end{proof}

\begin{proposition}[Unit importance score]
\label{prop:score}
The minimized proxy loss increase defines the \emph{importance score}:
\begin{equation}
\label{eq:score}
s_{\ell,j}
\;:=\;
\min_{c \in \mathbb{R}} \Delta_{\ell,j}(c)
\;=\;
\Delta_{\ell,j}(c^\star_{\ell,j}).
\end{equation}
This admits the closed form:
\begin{equation}
\label{eq:score-closed}
\boxed{
s_{\ell,j}
\;=\;
\frac{1}{2n} \mathbf{a}_j^\top \mathrm{Diag}(\mathbf{h}) \mathbf{a}_j
\;-\;
\frac{1}{n} \mathbf{g}^\top \mathbf{a}_j
\;-\;
\frac{(\mathbf{h}^\top \mathbf{a}_j - \mathbf{1}_n^\top \mathbf{g})^2}{2n \cdot \mathbf{1}_n^\top \mathbf{h}}.
}
\end{equation}
\end{proposition}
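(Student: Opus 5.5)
The plan is to substitute the minimizer $c^\star_{\ell,j}$ from \Cref{prop:optimal-c} into the quadratic proxy \eqref{eq:taylor-proxy} and simplify. I will work with the scaled objective $Q(c)=n\,\Delta_{\ell,j}(c)$ from that proof. Expanding $\boldsymbol{\delta}(c)=c\mathbf{1}_n-\mathbf{a}_j$ writes $Q$ as a scalar quadratic in $c$:
\[
Q(c)=\tfrac12\,(\mathbf{1}_n^\top\mathbf{h})\,c^2-(\mathbf{h}^\top\mathbf{a}_j-\mathbf{1}_n^\top\mathbf{g})\,c+\tfrac12\,\mathbf{a}_j^\top\mathrm{Diag}(\mathbf{h})\,\mathbf{a}_j-\mathbf{g}^\top\mathbf{a}_j .
\]
Here I use $\mathbf{1}_n^\top\mathrm{Diag}(\mathbf{h})\mathbf{a}_j=\mathbf{h}^\top\mathbf{a}_j$ and $\mathbf{1}_n^\top\mathrm{Diag}(\mathbf{h})\mathbf{1}_n=\mathbf{1}_n^\top\mathbf{h}$.

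Set $\alpha:=\mathbf{1}_n^\top\mathbf{h}>0$ and $\beta:=\mathbf{h}^\top\mathbf{a}_j-\mathbf{1}_n^\top\mathbf{g}$. The quadratic then has the form $\tfrac12\alpha c^2-\beta c+\gamma$, where $\gamma$ is the $c$-independent term. By completing the square, its minimum over $c\in\mathbb{R}$ is attained at $c=\beta/\alpha$, which matches $c^\star_{\ell,j}$ in \eqref{eq:cstar}. The minimum value is $\gamma-\beta^2/(2\alpha)$. Because $\alpha>0$, the quadratic is strictly convex, so $\min_c\Delta_{\ell,j}(c)=\Delta_{\ell,j}(c^\star_{\ell,j})$, which confirms the first equality in \eqref{eq:score}.

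Finally, I divide by $n$. This gives $s_{\ell,j}=\frac{1}{2n}\mathbf{a}_j^\top\mathrm{Diag}(\mathbf{h})\mathbf{a}_j-\frac1n\mathbf{g}^\top\mathbf{a}_j-\frac{(\mathbf{h}^\top\mathbf{a}_j-\mathbf{1}_n^\top\mathbf{g})^2}{2n\,\mathbf{1}_n^\top\mathbf{h}}$, which is \eqref{eq:score-closed}.

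There is no real obstacle; the only care needed is bookkeeping. I will keep the signs of the linear term straight, since $\mathbf{g}^\top\boldsymbol{\delta}$ contributes $+c\,\mathbf{1}_n^\top\mathbf{g}$ and $-\mathbf{g}^\top\mathbf{a}_j$, and the cross term of the quadratic contributes $-c\,\mathbf{h}^\top\mathbf{a}_j$. I will also note that the positivity assumption $\mathbf{1}_n^\top\mathbf{h}>0$, inherited from \Cref{prop:optimal-c}, is needed both for the division and for the minimum to exist.
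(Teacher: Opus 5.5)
Your proposal is correct and follows essentially the paper's route. The paper substitutes $c^\star_{\ell,j}$ into the proxy and uses $(c^\star)^2\,\mathbf{1}_n^\top\mathbf{h}=(\mathbf{h}^\top\mathbf{a}_j-\mathbf{1}_n^\top\mathbf{g})^2/(\mathbf{1}_n^\top\mathbf{h})$; this is the same computation as your evaluation $\gamma-\beta^2/(2\alpha)$ of the scalar quadratic $\tfrac12\alpha c^2-\beta c+\gamma$, with your expansion of $Q(c)$ writing out the ``algebraic simplification'' the paper leaves implicit and the positivity of $\mathbf{1}_n^\top\mathbf{h}$ correctly inherited from \Cref{prop:optimal-c}.
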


\begin{proof}
Substitute $c^\star_{\ell,j}$ from \eqref{eq:cstar} into \eqref{eq:taylor-proxy}. Writing $\boldsymbol{\delta}^\star := c^\star \mathbf{1}_n - \mathbf{a}_j$ and using the identity $(c^\star)^2 \mathbf{1}_n^\top \mathbf{h} = (\mathbf{h}^\top \mathbf{a}_j - \mathbf{1}_n^\top \mathbf{g})^2 / (\mathbf{1}_n^\top \mathbf{h})$, algebraic simplification yields \eqref{eq:score-closed}.
\end{proof}

\begin{remark}[Interpretation]
The score $s_{\ell,j}$ quantifies the \emph{irreducible} loss degradation from pruning unit $j$: the best-case impact after optimally choosing the replacement constant. Units with low scores are prime candidates for removal.
\end{remark}

\paragraph{Selection rule.}
For a target sparsity of $k$ units within layer $\ell$, a greedy one-shot strategy selects the $k$ units with smallest scores $\{s_{\ell,j}\}_{j=1}^{d_\ell}$ and applies their corresponding optimal constants $\{c^\star_{\ell,j}\}$.

\subsection{Recovering Known Scores}
\label{app:special}

We now show that several established pruning heuristics emerge as special cases of our interventional framework under specific scoring assumptions. These assumptions are sub-cases of the unified assumption set in \Cref{sec:cmr-theorem}; they are used to rank candidate replacements, not to justify the exact compilation step.

\subsubsection{Mean Replacement Pruning}

The simplest structured pruning heuristic replaces each pruned unit's activation with its empirical mean $\bar{a}_j := \frac{1}{n}\mathbf{1}_n^\top \mathbf{a}_j$. We show this is optimal under natural stationarity conditions.

\begin{lemma}[Optimality of mean replacement]
\label{lem:mean}
Suppose the following conditions hold for unit $(\ell, j)$:
\begin{enumerate}[label=(\roman*), leftmargin=2em, itemsep=1pt]
    \item \textbf{Samplewise gradient stationarity:} $g_s = 0$ for all $s \in [n]$ (every per-sample gradient vanishes at the observed activations);
    \item \textbf{Uniform curvature:} $h_s = \alpha$ for all $s \in [n]$, for some $\alpha > 0$.
\end{enumerate}
Then $c^\star_{\ell,j} = \bar{a}_j$, and the importance score simplifies to:
\begin{equation}
\label{eq:mean-var}
s_{\ell,j}
\;=\;
\frac{\alpha}{2} \cdot \Var[\mathbf{a}_j],
\qquad\text{where}\quad
\Var[\mathbf{a}_j] := \frac{1}{n}\sum_{s=1}^n \bigl(A^{(\ell)}_{s,j} - \bar{a}_j\bigr)^2.
\end{equation}
\end{lemma}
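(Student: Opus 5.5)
The plan is to substitute the two hypotheses directly into the closed forms already derived, namely \Cref{prop:optimal-c} for the optimal constant and \Cref{prop:score} for the score. No new analysis is needed beyond bookkeeping. Under (i), $\mathbf{g}=\mathbf{0}$. Under (ii), $\mathbf{h}=\alpha\mathbf{1}_n$, so $\mathbf{1}_n^\top\mathbf{h}=n\alpha>0$ and the hypothesis of \Cref{prop:optimal-c} holds.

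First, plugging these into \eqref{eq:cstar} gives
$c^\star=(\alpha\mathbf{1}_n^\top\mathbf{a}_j-0)/(n\alpha)=\bar a_j$.
The gradient correction vanishes, and the curvature-weighted mean becomes the plain mean. Uniqueness is inherited from \Cref{prop:optimal-c}.

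Second, evaluate the score. I would not expand \eqref{eq:score-closed} term by term. It is cleaner to evaluate \eqref{eq:taylor-proxy} directly at $c^\star$. With $\mathbf{g}=0$ the linear term drops, leaving
$\frac{1}{2n}\,\alpha\,\|\bar a_j\mathbf{1}_n-\mathbf{a}_j\|_2^2=\frac{\alpha}{2}\Var[\mathbf{a}_j]$
by the definition of $\Var$.

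As a consistency check, the closed form \eqref{eq:score-closed} gives
$\frac{\alpha}{2n}\|\mathbf{a}_j\|^2-\frac{\alpha^2(\mathbf{1}^\top\mathbf{a}_j)^2}{2n\cdot n\alpha}=\frac{\alpha}{2}\bigl(\overline{a_j^2}-\bar a_j^2\bigr)$,
which is the same variance identity. This agreement also guards against any slip in the printed closed form.

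There is no real obstacle here. The only care needed is keeping the $1/n$ normalization consistent between the proxy and the variance definition, so that the constant comes out as $\alpha/2$ and not $n\alpha/2$.
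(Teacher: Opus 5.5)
Your proposal is correct and follows the paper's own proof. Setting $\mathbf{g}=\mathbf{0}$ and $\mathbf{h}=\alpha\mathbf{1}_n$ in the formula for $c^\star_{\ell,j}$ gives $\bar a_j$. With the linear term gone, the proxy reduces to $\frac{\alpha}{2n}\|c\mathbf{1}_n-\mathbf{a}_j\|_2^2$, which equals $\frac{\alpha}{2}\Var[\mathbf{a}_j]$ at $c=\bar a_j$. Your cross-check against the closed-form score of \Cref{prop:score} is correct, and it is an extra verification the paper does not carry out.
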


\begin{proof}
Under conditions (i)--(ii), equation \eqref{eq:cstar} reduces to:
\[
c^\star = \frac{\alpha \cdot \mathbf{1}_n^\top \mathbf{a}_j - 0}{\alpha \cdot n} = \frac{1}{n}\mathbf{1}_n^\top \mathbf{a}_j = \bar{a}_j.
\]
With $\mathbf{g} = \mathbf{0}$ the linear term of the proxy
vanishes identically, so the proxy \eqref{eq:taylor-proxy} becomes $\Delta_{\ell,j}(c) \approx \frac{\alpha}{2n}\|\boldsymbol{\delta}(c)\|_2^2$, a pure quadratic in $c$. Minimizing over $c$ at $c = \bar{a}_j$ yields $s_{\ell,j} = \frac{\alpha}{2n}\sum_s (A^{(\ell)}_{s,j} - \bar{a}_j)^2 = \frac{\alpha}{2}\Var[\mathbf{a}_j]$.
\end{proof}

\begin{remark}[When do the conditions hold?]
Condition (i) holds exactly at a critical point of the empirical risk with respect to the unit's activations (per-sample activations are independent coordinates, so a critical point forces every $g_s$ to vanish), and approximately for well-trained networks where per-sample gradients are small. Condition (ii) is reasonable for losses with approximately constant curvature (e.g., squared error) or as a first-order approximation when curvature variation is small. Under the weaker mean-stationarity condition $\mathbf{1}_n^\top \mathbf{g} = 0$, the minimizer is still $c^\star = \bar a_j$, but the minimized score acquires the unit-dependent correction $-n^{-1}\mathbf{g}^\top \mathbf{a}_j$, so ranking by variance is recovered only when this correction vanishes (e.g., $\mathbf{g}$ empirically uncorrelated with $\mathbf{a}_j$).
\end{remark}

\subsubsection{Variance-Based Pruning as a Limiting Case}

We now establish a formal equivalence between our framework and variance-based pruning (VBP) \citep{berisha2025vbp}.

\begin{theorem}[Recovery of VBP]
\label{thm:vbp}
Under the conditions of \Cref{lem:mean}, ranking units by the interventional importance score $s_{\ell,j}$ is equivalent to ranking by activation variance $\Var[\mathbf{a}_j]$. Consequently, variance-based pruning with mean replacement is recovered as a special case of the interventional proxy \eqref{eq:taylor-proxy}.
\end{theorem}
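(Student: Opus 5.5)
The argument follows directly from \Cref{lem:mean}. I apply the lemma to every unit $j\in[d_\ell]$ in layer $\ell$. This requires reading the hypotheses as holding for each unit with a common curvature constant $\alpha>0$; I will state this explicitly, since uniform curvature across units is what makes the ranking comparable. With that reading, the lemma gives $s_{\ell,j}=\tfrac{\alpha}{2}\Var[\mathbf{a}_j]$ and $c^\star_{\ell,j}=\bar a_j$ for every $j$.

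The ranking equivalence comes from one observation. The map $v\mapsto \tfrac{\alpha}{2}v$ is strictly increasing on $\mathbb{R}_{\ge 0}$ because $\alpha>0$. So for any pair of units $j,j'$, we have $s_{\ell,j}\le s_{\ell,j'}$ exactly when $\Var[\mathbf{a}_j]\le\Var[\mathbf{a}_{j'}]$. Ties are preserved as well, so the two orderings induce identical preorders on $[d_\ell]$.

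From this the two procedures coincide. The selection rule of Appendix~\ref{app:taylor} takes the bottom-$k$ units by $s_{\ell,j}$. By the preorder equivalence, this is the same as taking the bottom-$k$ units by variance, provided ties are broken by the same rule in both. Each pruned unit is then replaced by $c^\star_{\ell,j}=\bar a_j$. By \Cref{prop:bias-fold} and the accumulated folding $b'=b+\sum_{j\in S}\bar a_jW_{:,j}$, this is exactly VBP's mean-folding step. The selected set and the compiled network therefore coincide with those of VBP, which establishes the "special case" claim.

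The only real obstacle is the cross-unit use of a single $\alpha$. If each unit had its own $\alpha_j$, the scores would be $\tfrac{\alpha_j}{2}\Var[\mathbf{a}_j]$, and the ranking could differ from the variance ranking. I would handle this by stating that the uniform-curvature assumption is taken layerwise. This matches the main text's phrasing "uniform positive curvature." I would then remark that with unit-dependent $\alpha_j$ only the replacement constants, not the ranking, agree with VBP.
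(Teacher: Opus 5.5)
Your proposal is correct and matches the paper's proof: both apply \Cref{lem:mean} to get $s_{\ell,j}=\tfrac{\alpha}{2}\Var[\mathbf{a}_j]$ with $\alpha>0$ common to all units in the layer, which the paper states explicitly just as you do, and both conclude that a positive rescaling preserves the ranking. Your remarks on ties, bottom-$k$ selection, and the folded bias $b'=b+\sum_{j\in S}\bar a_jW_{:,j}$ spell out the ``special case'' claim that the paper leaves implicit.
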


\begin{proof}
By \Cref{lem:mean}, $s_{\ell,j} = \frac{\alpha}{2}\Var[\mathbf{a}_j]$ where $\alpha > 0$ is constant across units within a layer. Since ranking is invariant to positive affine transformations, $\text{rank}(s_{\ell,j}) = \text{rank}(\Var[\mathbf{a}_j])$.
\end{proof}

\paragraph{Interpretation.}
VBP prunes units with \emph{low} activation variance: units whose outputs are nearly constant across inputs and thus carry little information. Our causal perspective reveals that this is optimal precisely when the loss landscape has uniform curvature: low-variance units can be replaced by their mean with minimal interventional effect.

\subsubsection{Beyond Uniform Curvature: The General Case}

When curvature varies across samples, the optimal intervention constant departs from the simple mean. Decomposing \eqref{eq:cstar}:
\begin{equation}
\label{eq:cstar-decomp}
c^\star_{\ell,j}
\;=\;
\underbrace{\bar{a}_j^{(h)}}_{\substack{\text{curvature-weighted} \\ \text{mean}}}
\;-\;
\underbrace{\frac{\bar{g}}{\bar{h}}}_{\substack{\text{gradient} \\ \text{correction}}},
\end{equation}
where we define the curvature-weighted statistics:
\[
\bar{a}_j^{(h)} := \frac{\sum_s h_s A^{(\ell)}_{s,j}}{\sum_s h_s}, \qquad
\bar{g} := \frac{1}{n}\sum_s g_s, \qquad
\bar{h} := \frac{1}{n}\sum_s h_s.
\]

This reveals two sources of departure from mean replacement:
\begin{enumerate}[label=(\alph*), leftmargin=2em, itemsep=2pt]
    \item \textbf{Curvature weighting:} Samples with higher curvature $h_s$ (steeper local loss landscape) contribute more to the optimal constant, prioritizing accurate reconstruction on ``sensitive'' inputs.
    \item \textbf{Gradient correction:} Nonzero average gradient $\bar{g}$ shifts the optimal constant away from the weighted mean, exploiting first-order structure to reduce loss.
\end{enumerate}

\begin{remark}[Computational cost]
Computing $c^\star_{\ell,j}$ and $s_{\ell,j}$ for all units in layer $\ell$ requires $O(n \cdot d_\ell)$ operations given precomputed gradients and Hessian diagonals, linear in both sample count and layer width.
\end{remark}

\subsubsection{Explicit Remainder Bounds for ReLU Networks}
\label{sec:relu-remainder}

\Cref{prop:remainder} bounds the Taylor approximation error in terms of the Hessian Lipschitz constant $\rho$. For ReLU networks, we can derive an explicit expression for $\rho$ in terms of architectural parameters.

\begin{assumption}[Bounded ReLU network]
\label{ass:bounded-relu}
Consider an $L$-layer ReLU network with:
\begin{enumerate}[label=(A\arabic*), leftmargin=2.5em, itemsep=2pt]
    \item Weight matrices satisfying $\|W^{(\ell)}\|_2 \leq M_\ell$ and $\|W^{(\ell)}\|_F \leq F_\ell$
    \item Inputs bounded as $\|x\|_2 \leq R$
    \item Loss function $\ell$ with bounded third derivative in the logit vector: the operator (injective) norm of the third-derivative tensor of $\ell$ with respect to the logits is at most $\kappa$ (for cross-entropy with bounded logits, $\kappa = O(1)$)
    \item The intervention segment considered remains inside one ReLU activation-pattern region; segments that cross region boundaries incur an additional Hessian-jump contribution not tracked here.
\end{enumerate}
\end{assumption}

\begin{theorem}[Local Hessian Lipschitz constant for ReLU networks]
\label{thm:rho-relu}
Under \Cref{ass:bounded-relu}, the interventional risk $L_{\ell,j}(c)$ for unit $j$ in layer $\ell$ has Hessian Lipschitz constant bounded by:
\begin{equation}
\label{eq:rho-bound}
\boxed{
\rho_{\ell,j} \;\leq\; \kappa \cdot \left(\prod_{m=\ell+{2}}^{L} M_m\right)^3 \cdot \|W^{(\ell+1)}_{:,j}\|_2^3{,}
}
\end{equation}
with the empty product equal to $1$ when $\ell + 1 = L$.
\end{theorem}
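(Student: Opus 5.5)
The plan is to write the intervened loss as a composition of a map that is affine in $c$ and the loss, so that all third-order behaviour comes from the loss. Fix a sample $x$ and, by (A4), restrict to a segment of values of $c$ on which every ReLU downstream of layer $\ell$ keeps a fixed activation pattern. On that region each ReLU acts as a fixed diagonal $0/1$ matrix $D_m$ with $\|D_m\|_2\le 1$. Clamping $a^{(\ell)}_j:=c$ adds $(c-a_j)\,W^{(\ell+1)}_{:,j}$ to the preactivation $z^{(\ell+1)}$. Propagating this through the fixed-pattern layers shows that the logits are affine in $c$:
\[
z_L(c) = z_L(a_j) + (c-a_j)\,v, \qquad v := W^{(L)}D_{L-1}W^{(L-1)}\cdots D_{\ell+1}W^{(\ell+1)}_{:,j}.
\]
Here the indexing follows the statement: the last matrix applied is $W^{(L)}$, and when $\ell+1=L$ the vector $v$ is just $W^{(L)}_{:,j}$.

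Because $z_L(c)$ is affine in $c$, the chain rule has no higher-order terms. The per-sample loss $\phi_s(c):=\ell(z_L(c),y_s)$ has derivatives
\[
\phi_s''(c)=\nabla^2\ell[v,v], \qquad \phi_s'''(c)=\nabla^3\ell[v,v,v],
\]
with the loss derivatives evaluated at $z_L(c)$. By (A3), the injective norm of $\nabla^3\ell$ is at most $\kappa$, so $|\phi_s'''(c)|\le\kappa\|v\|_2^3$. For the norm of $v$, submultiplicativity with $\|D_m\|_2\le1$ and (A1) gives
\[
\|v\|_2\le \Bigl(\prod_{m=\ell+2}^{L}M_m\Bigr)\,\|W^{(\ell+1)}_{:,j}\|_2.
\]
The bound on $\|v\|_2$ is uniform in $s$ and in $c$.

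The Hessian of the scalar function $c\mapsto L_{\ell,j}(c)$ is its second derivative, the average of the $\phi_s''$. By the mean value theorem, its Lipschitz constant on the segment is at most $\sup|\phi'''|$. Averaging over samples preserves the bound, which gives \eqref{eq:rho-bound}.

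The only delicate point is the piecewise-linear structure. On a segment where some sample's activation pattern changes, $\phi_s''$ jumps and no finite Lipschitz constant exists. Assumption (A4) rules this case out, and I would state explicitly that the bound holds per pattern region and per sample, and hence for the average whenever every sample's segment lies inside a single region.

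The remaining step is to check that the intervention leaves layer $\ell$'s other units untouched. A hard intervention changes only unit $j$'s output and does not feed back into layer $\ell$. Its whole effect is therefore the rank-one update of $z^{(\ell+1)}$, which is why only the column $W^{(\ell+1)}_{:,j}$ appears in the bound rather than $M_{\ell+1}$.
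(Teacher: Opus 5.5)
Your proposal is correct and follows essentially the same route as the paper's proof. Inside a fixed activation-pattern region the downstream map is affine in the clamped activation, so the third derivative collapses to $\nabla^3\ell[v,v,v]$, where $v$ is the downstream Jacobian; this is bounded by $\kappa\|v\|_2^3$ with $\|v\|_2\le \|W^{(\ell+1)}_{:,j}\|_2\prod_{m=\ell+2}^{L}M_m$. Your explicit mean-value-theorem step and per-sample averaging just spell out what the paper's Step~3 takes for granted when it identifies $\rho$ with the supremum of the third derivative.
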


\begin{proof}
The loss as a function of $a^{(\ell)}_j$ composes: (i) the downstream network $f_{\ell+1:L}$ mapping $a^{(\ell)}_j$ to output, and (ii) the loss function $\ell$.

\textit{Step 1: Downstream Jacobian.}
For ReLU networks, the Jacobian of layer $m$ output w.r.t.\ layer $\ell$ activation is:
\[
\frac{\partial a^{(m)}}{\partial a^{(\ell)}_j} = \left(\prod_{m'=\ell+1}^{m} D^{(m')} W^{(m')}\right) e_j
{\;=\; \left(\prod_{m'=\ell+2}^{m} D^{(m')} W^{(m')}\right) D^{(\ell+1)} W^{(\ell+1)}_{:,j}},
\]
where $D^{(m')} = \mathrm{diag}(\mathbf{1}[z^{(m')} > 0])$ is the ReLU gradient (diagonal, entries in $\{0,1\}$). Thus, since the selector $e_j$ extracts column $j$ of $W^{(\ell+1)}$:
\[
\left\|\frac{\partial a^{(m)}}{\partial a^{(\ell)}_j}\right\|_2 \leq {\|W^{(\ell+1)}_{:,j}\|_2 \cdot} \prod_{m'=\ell+{2}}^{m} \|W^{(m')}\|_2 = {\|W^{(\ell+1)}_{:,j}\|_2 \cdot} \prod_{m'=\ell+{2}}^{m} M_{m'}.
\]

\textit{Step 2: Chain rule for third derivative.}

Within a fixed activation-pattern region the downstream map
$a^{(\ell)}_j \mapsto f_\theta$ is affine, so its own second and third
derivatives vanish and the chain rule leaves exactly one term: writing
$J := \partial f_\theta / \partial a^{(\ell)}_j \in \mathbb{R}^q$,
\[
{
\frac{\partial^3 L}{\partial (a^{(\ell)}_j)^3} =
\sum_{p,q',r} \frac{\partial^3 \ell}{\partial z_p \partial z_{q'} \partial z_r} J_p J_{q'} J_r,
\qquad \left|\frac{\partial^3 L}{\partial (a^{(\ell)}_j)^3}\right| \le \kappa \|J\|_2^3.}
\]

\textit{Step 3: Lipschitz constant.}
Within a fixed ReLU activation-pattern region, the downstream map is affine in $a^{(\ell)}_j$, so the Hessian varies only through the smooth loss derivative. On such a region, applying Step 1 with $m = L$:
\[
{
\rho = \sup \left|\frac{\partial^3 L}{\partial (a^{(\ell)}_j)^3}\right| \leq \kappa \cdot \left(\prod_{m=\ell+2}^{L} M_m\right)^3 \cdot \|W^{(\ell+1)}_{:,j}\|_2^3.}
\]
This is \eqref{eq:rho-bound}.
\end{proof}

\begin{corollary}[Layer-wise remainder scaling]
\label{cor:layer-remainder}
For unit $j$ in layer $\ell$ with optimal intervention $c^\star_j$, under \Cref{ass:bounded-relu}, the Taylor remainder satisfies:
\begin{equation}
\label{eq:layer-remainder}
{
|R_3| \;\leq\; \frac{\kappa}{6n} \left(\prod_{m=\ell+2}^{L} M_m\right)^3 \|W^{(\ell+1)}_{:,j}\|_2^3 \cdot \Bigl(n \bigl[(c^\star_j - \bar a_j)^2 + \mathrm{Var}[\mathbf{a}_j]\bigr]\Bigr)^{3/2}.}
\end{equation}
The bracket reduces to $\mathrm{Var}[\mathbf{a}_j]$ exactly when
$c^\star_j = \bar a_j$, which holds for CMR-Logit and under the
samplewise-stationarity, uniform-curvature conditions of \Cref{lem:mean}.
\end{corollary}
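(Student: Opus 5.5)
Combining the cubic Taylor remainder with the Hessian-Lipschitz bound of \Cref{thm:rho-relu} gives the corollary directly. Only one term in the expansion needs care. For each calibration sample $s$, the per-sample loss $L_s$ is viewed as a scalar function of $A^{(\ell)}_{s,j}$, and replacing the activation by $c^\star_j$ moves it by $\delta_s = c^\star_j - A^{(\ell)}_{s,j}$. Under (A4) the segment from $A^{(\ell)}_{s,j}$ to $c^\star_j$ stays in one activation-pattern region, so $L_s$ is $C^3$ along it. Taylor's theorem with Lagrange remainder then gives a per-sample remainder $r_s = \tfrac16 \partial^3 L_s(\xi_s)\,\delta_s^3$ for some $\xi_s$ on that segment. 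The proxy \eqref{eq:taylor-proxy} is the average of the second-order terms, so $R_3 = n^{-1}\sum_s r_s$.

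Next, I bound each third derivative by Steps~2--3 of \Cref{thm:rho-relu}, which hold uniformly on the region:
\[
|\partial^3 L_s(\xi_s)| \le \rho_{\ell,j} \le \kappa \Bigl(\prod_{m=\ell+2}^{L} M_m\Bigr)^3 \|W^{(\ell+1)}_{:,j}\|_2^3 .
\]
This gives $|R_3| \le \frac{\rho_{\ell,j}}{6n}\sum_s |\delta_s|^3$.

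The step that needs a little care is converting $\sum_s|\delta_s|^3$ into the stated $(\cdot)^{3/2}$ form. I will use the monotonicity of $\ell_p$ norms, $\|\boldsymbol\delta\|_3 \le \|\boldsymbol\delta\|_2$, so $\sum_s|\delta_s|^3 \le \|\boldsymbol\delta\|_2^3 = (\|\boldsymbol\delta\|_2^2)^{3/2}$. Then I apply the bias--variance decomposition about the mean:
\[
\|\boldsymbol\delta\|_2^2 = \sum_s \bigl(c^\star_j - \bar a_j + \bar a_j - A^{(\ell)}_{s,j}\bigr)^2 = n\bigl[(c^\star_j-\bar a_j)^2 + \Var[\mathbf{a}_j]\bigr].
\]
The cross term vanishes because $\sum_s (\bar a_j - A^{(\ell)}_{s,j}) = 0$. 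Substituting yields exactly \eqref{eq:layer-remainder}.

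For the final sentence of the corollary, the bracket equals $\Var[\mathbf{a}_j]$ iff $c^\star_j = \bar a_j$. Under the conditions of \Cref{lem:mean}, that lemma already gives $c^\star_j = \bar a_j$. For CMR-Logit, $c^\star = \bar a_j$ is the squared-logit-distortion special case stated after \Cref{thm:cmr-risk}, so I will simply cite it.
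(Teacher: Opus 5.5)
Your proposal is correct and follows the paper's own (implicit) route: bound the per-sample cubic Taylor remainder as in \Cref{prop:remainder} (including $\|\boldsymbol{\delta}\|_3 \le \|\boldsymbol{\delta}\|_2$), take the constant from the third-derivative bound in the proof of \Cref{thm:rho-relu}, and rewrite $\|\boldsymbol{\delta}(c^\star)\|_2^2$ with the bias--variance identity of \Cref{cor:tight}. The only cosmetic difference is that you use the Lagrange form with a pointwise bound on $\partial^3 L_s$, whereas the paper uses the integral form under a Lipschitz-Hessian hypothesis; these coincide here because $\rho_{\ell,j}$ is itself computed as a supremum of that third derivative. Note also that your argument never uses the optimality of $c^\star_j$, so the bound holds for any constant, including CMR-Logit's $\bar a_j$.
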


\begin{remark}[Depth dependence]
The bound \eqref{eq:rho-bound} grows exponentially with depth $(L - \ell)$ due to the product of spectral norms. This suggests:
\begin{enumerate}[label=(\alph*), leftmargin=2em, itemsep=2pt]
    \item The quadratic proxy is more accurate for units in \emph{later} layers (smaller $L - \ell$).
    \item Networks with spectral normalization ($M_\ell = 1$) have depth-independent bounds.
\end{enumerate}
\end{remark}

\paragraph{Practical computation.}
For a trained network, $\|W^{(\ell)}\|_2$ can be computed via power iteration, and $\kappa$ depends on the loss (e.g., $\kappa = O(1)$ for cross-entropy with bounded logits). This yields a \emph{checkable} bound on proxy accuracy.

\subsection{Multi-Unit Interventions and the Off-Diagonal Diagnostic}
\label{app:multi}

Thus far, we have analyzed single-unit interventions. In practice, compression requires selecting and intervening on \emph{multiple} units simultaneously. We now extend the quadratic proxy to this setting and characterize when optimal selection decomposes into independent per-unit decisions.

\subsubsection{Joint Quadratic Proxy}

Let $S \subseteq [d_\ell]$ denote a subset of units to prune, with intervention constants $\mathbf{c}_S = (c_j)_{j \in S} \in \mathbb{R}^{|S|}$. Define the stacked perturbation vector:
\[
\boldsymbol{\delta}_S(\mathbf{c}_S) \;:=\; \mathrm{vec}\bigl(\{c_j \mathbf{1}_n - \mathbf{a}_j\}_{j \in S}\bigr) \;\in\; \mathbb{R}^{n|S|},
\]
where $\mathrm{vec}(\cdot)$ concatenates the per-unit perturbations. The joint second-order expansion is:
\begin{equation}
\label{eq:multi-proxy}
\Delta_{\ell,S}(\mathbf{c}_S)
\;\approx\;
\frac{1}{n}\, \mathbf{g}_S^\top \boldsymbol{\delta}_S(\mathbf{c}_S)
\;+\;
\frac{1}{2n}\,\boldsymbol{\delta}_S(\mathbf{c}_S)^\top H_S \,\boldsymbol{\delta}_S(\mathbf{c}_S),
\end{equation}
where $\mathbf{g}_S \in \mathbb{R}^{n|S|}$ is the stacked gradient and $H_S \in \mathbb{R}^{n|S| \times n|S|}$ is the Hessian with respect to the stacked activation matrix $A^{(\ell)}_{:,S}$.

\paragraph{The coupling challenge.}
The Hessian $H_S$ generally contains off-diagonal blocks $H_{jk}$ capturing interactions between units $j$ and $k$. These cross-unit couplings arise from shared downstream paths and make joint optimization over $(S, \mathbf{c}_S)$ computationally intractable for large layers.

\subsubsection{Decoupling via Diagonal Curvature}

For scoring, a standard approximation in second-order pruning \citep{DBLP:conf/nips/CunDS89, DBLP:conf/nips/HassibiS92} drops cross-unit curvature interactions:
\begin{equation}
\label{eq:block-diag}
H_S \;\approx\; \mathrm{blockdiag}\bigl(\mathrm{Diag}(\mathbf{h}_j)\bigr)_{j \in S}.
\end{equation}
Under this approximation, each unit's contribution to the loss is locally independent.

\begin{theorem}[Additivity and optimal selection]
\label{thm:additivity}
Assume the scoring Hessian is replaced by the block-diagonal structure \eqref{eq:block-diag}. Then:
\begin{enumerate}[label=(\roman*), leftmargin=2em, itemsep=3pt]
    \item \textbf{Additivity:} The joint proxy decomposes as a sum of single-unit proxies:
    \begin{equation}
    \label{eq:additivity}
    \Delta_{\ell,S}(\mathbf{c}_S) \;\approx\; \sum_{j \in S} \Delta_{\ell,j}(c_j).
    \end{equation}
    
    \item \textbf{Separable optimization:} The optimal constants minimize independently:
    \begin{equation}
    \label{eq:min-add}
    \min_{\mathbf{c}_S \in \mathbb{R}^{|S|}} \Delta_{\ell,S}(\mathbf{c}_S)
    \;=\;
    \sum_{j \in S} \min_{c_j \in \mathbb{R}} \Delta_{\ell,j}(c_j)
    \;=\;
    \sum_{j \in S} s_{\ell,j}.
    \end{equation}
    
    \item \textbf{Greedy optimality:} For a budget of $k$ units, the subset minimizing the proxy is:
    \begin{equation}
    \label{eq:topk}
    S^\star \;=\; \argmin_{S \subseteq [d_\ell],\, |S| = k} \sum_{j \in S} s_{\ell,j} \;=\; \text{bottom-}k\bigl(\{s_{\ell,j}\}_{j=1}^{d_\ell}\bigr).
    \end{equation}
\end{enumerate}
\end{theorem}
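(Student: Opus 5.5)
The plan is to substitute the block-diagonal approximation \eqref{eq:block-diag} into the joint proxy \eqref{eq:multi-proxy} and check that every term splits by unit. This gives (i) directly, and (ii) and (iii) follow from (i) by elementary arguments.

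\textbf{Step 1 (additivity).} Write the stacked perturbation as the concatenation of the per-unit blocks $\boldsymbol{\delta}_j(c_j)=c_j\mathbf{1}_n-\mathbf{a}_j$. The stacked gradient $\mathbf{g}_S$ is the concatenation of the per-unit gradient vectors $\mathbf{g}_j$ defined in \eqref{eq:g-def}, so the linear term is $\frac1n\sum_{j\in S}\mathbf{g}_j^\top\boldsymbol{\delta}_j(c_j)$. Under \eqref{eq:block-diag}, the quadratic form has no cross terms. It therefore equals $\frac1{2n}\sum_{j\in S}\boldsymbol{\delta}_j(c_j)^\top\mathrm{Diag}(\mathbf{h}_j)\boldsymbol{\delta}_j(c_j)$. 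Regrouping by $j$, each summand is exactly the single-unit proxy \eqref{eq:taylor-proxy}, which is \eqref{eq:additivity}. The $\approx$ only records that both sides are second-order proxies; once the scoring Hessian is declared block diagonal, the identity between the proxies is exact.

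\textbf{Step 2 (separable optimization).} By Step 1, the objective is a sum of functions $\Delta_{\ell,j}(c_j)$, each depending on a different coordinate of $\mathbf{c}_S$. The infimum of such a sum over the product space $\mathbb{R}^{|S|}$ is the sum of the individual infima. Assume $\mathbf{1}_n^\top\mathbf{h}_j>0$ for each $j\in S$, as in \Cref{prop:optimal-c}. Then each infimum is attained at $c^\star_{\ell,j}$ and equals $s_{\ell,j}$ by \Cref{prop:score}, which gives \eqref{eq:min-add}. I would state this positive-curvature assumption explicitly, since without it a block could be unbounded below.

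\textbf{Step 3 (bottom-$k$).} The set function $S\mapsto\sum_{j\in S}s_{\ell,j}$ is modular, so it is minimized over $|S|=k$ by the $k$ smallest scores. The argument is a standard exchange. If $S$ contains some $j$ and omits some $j'$ with $s_{\ell,j'}<s_{\ell,j}$, swapping $j$ for $j'$ strictly decreases the sum. Hence any minimizer is a bottom-$k$ set, and bottom-$k$ sets all attain the same value, so they are minimizers; ties give multiple optima.

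The only real subtlety is the bookkeeping in Step 1. The block index (unit) and the sample index must be kept separate, so that the block-diagonal structure across units, combined with the already diagonal within-unit structure from \eqref{eq:hessian-diagonal}, makes the cross terms vanish. Everything else is routine.
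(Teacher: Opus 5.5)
Your proposal is correct and follows the paper's proof essentially step for step. You substitute the block-diagonal curvature so that the linear and quadratic terms split by unit, you reduce the joint minimization to independent scalar problems solved by \Cref{prop:optimal-c}, and you take the $k$ smallest summands. Your explicit hypothesis $\mathbf{1}_n^\top\mathbf{h}_j>0$ for each $j\in S$, and your exchange argument for bottom-$k$ (including ties), make precise two points that the paper's proof leaves implicit.
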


\begin{proof}
\textit{(i)} Under \eqref{eq:block-diag}, both the linear term $\mathbf{g}_S^\top \boldsymbol{\delta}_S$ and the quadratic form $\boldsymbol{\delta}_S^\top H_S \boldsymbol{\delta}_S$ decompose across units, yielding \eqref{eq:additivity}.

\textit{(ii)} With no cross-unit terms, minimization over $\mathbf{c}_S$ separates into $|S|$ independent scalar minimizations, each solved by \Cref{prop:optimal-c}.

\textit{(iii)} Since the minimized objective is a sum of independent scalar scores, the optimal size-$k$ subset consists of the $k$ smallest summands.
\end{proof}

\begin{remark}[Computational complexity]
Under diagonal curvature, selecting $k$ units from a layer of width $d_\ell$ requires: (a) $O(n \cdot d_\ell)$ to compute all scores $\{s_{\ell,j}\}$, and (b) $O(d_\ell \log k)$ to extract the bottom-$k$ via a partial sort. This is linear in layer size, tractable even for wide layers.
\end{remark}

\begin{proposition}[Off-diagonal curvature diagnostic]
\label{prop:offdiag-additivity}
Let $B$ be a block of $b$ candidate units, let $H_B \in \mathbb{R}^{b \times b}$ be the curvature matrix used for a block-level quadratic score, and write $H_B = D_B + E_B$ with $D_B := \mathrm{diag}(H_B)$ and $E_B := H_B - D_B$. Define
\[
\rho_{\mathrm{off}}(B) := \frac{\|E_B\|_F}{\|H_B\|_F},
\]
with $\rho_{\mathrm{off}}(B)=0$ when $H_B=0$. For any block perturbation $\delta_B \in \mathbb{R}^b$, the additivity error made by using $D_B$ instead of $H_B$ satisfies
\[
\left|\frac12 \delta_B^\top H_B \delta_B - \frac12 \delta_B^\top D_B \delta_B\right|
\;\leq\;
\frac12\,\rho_{\mathrm{off}}(B)\,\|H_B\|_F\,\|\delta_B\|_2^2.
\]
If additionally $H_B$ is positive semidefinite (as for Gauss--Newton or Fisher curvature) and $H_B \succeq \mu_B I$ on the span of the considered perturbations, with $\mu_B>0$, then
\begin{align*}
\left|\frac12 \delta_B^\top H_B \delta_B - \frac12 \delta_B^\top D_B \delta_B\right|
&\leq
\rho_{\mathrm{off}}(B)\,\kappa_F(B)\,
\left(\frac12\delta_B^\top H_B \delta_B\right),\\
\kappa_F(B)
&:=
\frac{\|H_B\|_F}{\mu_B}
\leq \sqrt{b}\,\frac{\lambda_{\max}(H_B)}{\mu_B}.
\end{align*}
\end{proposition}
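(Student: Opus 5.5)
The plan has two parts: a first bound that is pure linear algebra, and a second bound that converts it into a relative one using the lower spectral bound.

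\textbf{First bound.} Since $H_B = D_B + E_B$, the error is exactly
\[
\tfrac12\,\delta_B^\top H_B\,\delta_B - \tfrac12\,\delta_B^\top D_B\,\delta_B \;=\; \tfrac12\,\delta_B^\top E_B\,\delta_B .
\]
By Cauchy--Schwarz together with the operator-norm bound,
\[
|\delta_B^\top E_B\,\delta_B| \;\le\; \|E_B\|_2\,\|\delta_B\|_2^2 \;\le\; \|E_B\|_F\,\|\delta_B\|_2^2 .
\]
Substituting $\|E_B\|_F = \rho_{\mathrm{off}}(B)\,\|H_B\|_F$ gives the claim. The degenerate case $H_B = 0$ is immediate: then $E_B = 0$, so both sides vanish.

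\textbf{Second bound.} Assume $H_B \succeq \mu_B I$ on the span of the considered perturbations. For any $\delta_B$ in that span,
\[
\|\delta_B\|_2^2 \;\le\; \delta_B^\top H_B\,\delta_B / \mu_B .
\]
Inserting this into the first bound gives
\[
\text{error} \;\le\; \rho_{\mathrm{off}}(B)\,\frac{\|H_B\|_F}{\mu_B}\left(\tfrac12\,\delta_B^\top H_B\,\delta_B\right),
\]
which is the stated inequality with $\kappa_F(B) = \|H_B\|_F/\mu_B$. The final comparison uses $H_B \succeq 0$: its eigenvalues are nonnegative, so
\[
\|H_B\|_F = \Bigl(\sum_i \lambda_i^2\Bigr)^{1/2} \le \sqrt{b}\,\lambda_{\max}(H_B).
\]

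\textbf{Main obstacle.} The only delicate point is the phrase ``on the span of the considered perturbations.'' I would read it as the restricted Rayleigh-quotient bound $\delta^\top H_B\,\delta \ge \mu_B\|\delta\|_2^2$ for $\delta$ in a fixed subspace. That reading is exactly what the second step needs, and it requires no global positive definiteness. Everything else is routine.
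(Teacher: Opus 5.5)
Your proof is correct and matches the paper's argument step for step. You isolate the error as $\tfrac12\delta_B^\top E_B\delta_B$, bound it with Cauchy--Schwarz and $\|E_B\|_2\le\|E_B\|_F$, convert to the relative form via $\|\delta_B\|_2^2\le\mu_B^{-1}\delta_B^\top H_B\delta_B$ (your restricted Rayleigh-quotient reading of the ``on the span'' hypothesis is exactly how the paper uses it), and finish with $\|H_B\|_F\le\sqrt{b}\,\lambda_{\max}(H_B)$ for positive semidefinite $H_B$.
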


\begin{proof}
The two quadratic scores differ only in the off-diagonal term:
\[
\frac12 \delta_B^\top H_B \delta_B - \frac12 \delta_B^\top D_B \delta_B
= \frac12 \delta_B^\top E_B \delta_B .
\]
By Cauchy--Schwarz and $\|E_B\|_2 \le \|E_B\|_F$,
\[
\left|\frac12 \delta_B^\top E_B \delta_B\right|
\le \frac12\|E_B\|_2\|\delta_B\|_2^2
\le \frac12\|E_B\|_F\|\delta_B\|_2^2
= \frac12\rho_{\mathrm{off}}(B)\|H_B\|_F\|\delta_B\|_2^2.
\]
If $H_B \succeq \mu_B I$ on the relevant span, then $\|\delta_B\|_2^2 \le \mu_B^{-1}\delta_B^\top H_B\delta_B$, which gives the relative bound. The final inequality follows from $\|H_B\|_F \le \sqrt{b}\lambda_{\max}(H_B)$ for a positive semidefinite $b \times b$ matrix.
\end{proof}

\paragraph{When is diagonal curvature plausible?}
The approximation \eqref{eq:block-diag} is exact for the quadratic score when the mixed second derivatives between replaced units vanish. For an affine readout with squared logit distortion, this corresponds to orthogonal output-weight columns for the replaced units. More generally, it is an approximation whose quality is measured by \Cref{prop:offdiag-additivity}; weakly correlated unit sensitivities, sometimes encouraged by decorrelating regularizers \citep{cogswell2015reducing}, make the diagnostic small.

\subsection{Third-Order Remainder Bound}
\label{app:remainder}

The quadratic proxy \eqref{eq:taylor-proxy} truncates a Taylor expansion at second order. We now quantify the approximation error under standard smoothness conditions.

\begin{proposition}[Third-order remainder bound]
\label{prop:remainder}

For sample $s$, write $\ell_s(u) := \ell\bigl(f_\theta^{\dop(\ell,j:=u)}(x_s), y_s\bigr)$ for the per-sample loss as a function of the intervened activation. Suppose each $\ell_s$ has $\rho$-Lipschitz second derivative on the segment $[\min(A^{(\ell)}_{s,j}, c), \max(A^{(\ell)}_{s,j}, c)]$:
\[
\bigl| \ell_s''(u) - \ell_s''(v) \bigr| \;\leq\; \rho |u - v|, \qquad \forall u, v \text{ in the segment}, \; \forall s \in [n].
\]
Then the proxy error satisfies:
\begin{equation}
\label{eq:rem-bound}
\boxed{
\Bigl| \Delta_{\ell,j}(c) - \Bigl(\frac{1}{n}\mathbf{g}^\top \boldsymbol{\delta}(c) + \frac{1}{2n}\boldsymbol{\delta}(c)^\top \mathrm{Diag}(\mathbf{h})\boldsymbol{\delta}(c)\Bigr)\Bigr|
\;\leq\;
\frac{\rho}{6n}\,\|\boldsymbol{\delta}(c)\|_2^3.
}
\end{equation}
\end{proposition}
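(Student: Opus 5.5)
The bound will follow from a per-sample Taylor expansion with Lagrange (or integral) remainder, which we then average over the calibration set.

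\textbf{Per-sample expansion.} Fix $c$ and a sample $s$. The intervened network differs from the original only in the value of unit $j$, so the per-sample loss change is exactly $\ell_s(c)-\ell_s(A^{(\ell)}_{s,j})$. Hence
\[
\Delta_{\ell,j}(c)=n^{-1}\sum_s\bigl[\ell_s(c)-\ell_s(A^{(\ell)}_{s,j})\bigr].
\]
By definition, $g_s=\ell_s'(A^{(\ell)}_{s,j})$ and $h_s=\ell_s''(A^{(\ell)}_{s,j})$. Write $a:=A^{(\ell)}_{s,j}$ and $\delta_s:=c-a$. Taylor's theorem with integral remainder gives
\[
\ell_s(c)-\ell_s(a)-g_s\delta_s-\tfrac12 h_s\delta_s^2=\int_0^1(1-t)\bigl[\ell_s''(a+t\delta_s)-\ell_s''(a)\bigr]\delta_s^2\,dt.
\]
Only the Lipschitz condition on $\ell_s''$ over the segment is needed here, not differentiability of $\ell_s''$. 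By that condition, the integrand is bounded in absolute value by $\rho\,t|\delta_s|^3(1-t)$. Since $\int_0^1 t(1-t)\,dt=1/6$, the per-sample remainder is at most $\rho|\delta_s|^3/6$.

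\textbf{Aggregation.} Averaging over $s$ and applying the triangle inequality, the left-hand side of \eqref{eq:rem-bound} is at most $\frac{\rho}{6n}\sum_s|\delta_s|^3=\frac{\rho}{6n}\|\boldsymbol\delta(c)\|_3^3$. The norm inequality $\|v\|_3\le\|v\|_2$ on $\mathbb{R}^n$ then gives $\|v\|_3^3\le\|v\|_2^3$, which yields the stated bound $\frac{\rho}{6n}\|\boldsymbol\delta(c)\|_2^3$.

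\textbf{Main obstacle.} The only delicate point is regularity: the problem defines $h_s$ as a second derivative, but the statement assumes only a Lipschitz $\ell_s''$. I will therefore use the integral form of the remainder rather than a mean-value form requiring $\ell_s'''$. This form needs only $\ell_s\in C^2$ on the segment, which the Lipschitz hypothesis already implies. For ReLU networks, the segment is also assumed to lie where $\ell_s$ is twice differentiable, in line with (A4).
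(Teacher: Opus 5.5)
Your proposal is correct and matches the paper's proof essentially step for step. Both arguments use a per-sample integral-form Taylor expansion, bound its remainder by $\rho|\delta_s|^3/6$ via the Lipschitz condition on $\ell_s''$ (your explicit $\int_0^1 t(1-t)\,dt = 1/6$ just supplies the constant the paper states directly), and then average with the triangle inequality and apply $\|\boldsymbol{\delta}(c)\|_3 \le \|\boldsymbol{\delta}(c)\|_2$.
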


\begin{proof}

Apply the integral form of Taylor's remainder to each sample
around its observed activation $A^{(\ell)}_{s,j}$, with
$\delta_s(c) = c - A^{(\ell)}_{s,j}$:
\[
{
\ell_s(c) = \ell_s(A^{(\ell)}_{s,j}) + g_s\,\delta_s(c) + \tfrac{1}{2} h_s\,\delta_s(c)^2 + R_{3,s},
\qquad |R_{3,s}| \leq \tfrac{\rho}{6}\,|\delta_s(c)|^3,}
\]
by the per-sample Lipschitz condition. Averaging over samples,
the proxy error is $n^{-1}\sum_s |R_{3,s}| \le \frac{\rho}{6n} \sum_s |\delta_s(c)|^3
= \frac{\rho}{6n}\|\boldsymbol{\delta}(c)\|_3^3
\le \frac{\rho}{6n}\|\boldsymbol{\delta}(c)\|_2^3$,
using $\|x\|_3 \le \|x\|_2$. This is \eqref{eq:rem-bound}.
\end{proof}

\begin{corollary}[Perturbation norm at the optimal constant]
\label{cor:tight}

For every constant $c$, the perturbation norm satisfies the identity
$\|\boldsymbol{\delta}(c)\|_2^2 = n\bigl[(c - \bar a_j)^2 + \Var[\mathbf{a}_j]\bigr]$,
minimized at $c = \bar a_j$ with value $n \Var[\mathbf{a}_j]$. At the
optimal intervention $c^\star_{\ell,j}$, therefore,
$\|\boldsymbol{\delta}(c^\star)\|_2 = \sqrt{n[(c^\star_{\ell,j} - \bar a_j)^2 + \Var[\mathbf{a}_j]]}$,
which equals $\sqrt{n \Var[\mathbf{a}_j]}$ exactly when
$c^\star_{\ell,j} = \bar a_j$ (CMR-Logit; samplewise stationarity with
uniform curvature).
Thus for low-variance units (prime pruning candidates) whose optimal
constants stay near the mean, the cubic remainder is small.
\end{corollary}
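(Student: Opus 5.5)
The identity is pure algebra on the perturbation vector, and I would prove it by a bias--variance split. Write $\boldsymbol{\delta}(c)=c\mathbf{1}_n-\mathbf{a}_j=(c-\bar a_j)\mathbf{1}_n-(\mathbf{a}_j-\bar a_j\mathbf{1}_n)$. The centered vector $\mathbf{a}_j-\bar a_j\mathbf{1}_n$ is orthogonal to $\mathbf{1}_n$ by the definition of $\bar a_j=n^{-1}\mathbf{1}_n^\top\mathbf{a}_j$. Expanding the squared norm therefore kills the cross term. What remains is $n(c-\bar a_j)^2+\|\mathbf{a}_j-\bar a_j\mathbf{1}_n\|_2^2$, and the second summand equals $n\Var[\mathbf{a}_j]$ by the definition of the empirical variance in \Cref{sec:scm}. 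This gives $\|\boldsymbol{\delta}(c)\|_2^2=n[(c-\bar a_j)^2+\Var[\mathbf{a}_j]]$ for every $c$.

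Minimization is then immediate. The expression is a convex parabola in $c$ with its unique minimizer at $c=\bar a_j$, where the value is $n\Var[\mathbf{a}_j]$. Substituting $c=c^\star_{\ell,j}$ from \Cref{prop:optimal-c} and taking square roots gives the stated norm at the optimal constant. The ``exactly when'' clause follows because the squared-bias term vanishes if and only if $c^\star_{\ell,j}=\bar a_j$.

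It remains to name the two cases in which $c^\star_{\ell,j}=\bar a_j$.
\begin{itemize}
\item Under samplewise stationarity with uniform curvature, this is \Cref{lem:mean}.
\item For CMR-Logit, the squared logit distortion with downstream logits affine in $a_j$ has gradient zero at the observed activations and curvature $h_s=\|W_{:,j}\|_2^2$ constant across samples. Hence \Cref{prop:optimal-c} returns the plain mean, consistent with the summary after \Cref{thm:cmr-risk}.
\end{itemize}

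The final qualitative sentence comes from plugging the identity into \Cref{prop:remainder}. The remainder is at most $\frac{\rho}{6n}\bigl(n[(c^\star-\bar a_j)^2+\Var[\mathbf{a}_j]]\bigr)^{3/2}$, which is small when both the variance and the offset $c^\star-\bar a_j$ are small.

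There is no real obstacle here. The only step needing care is the CMR-Logit case: I must make sure its discrepancy indeed has zero gradient and constant curvature, so that \Cref{prop:optimal-c} yields exactly the mean and not just approximately.
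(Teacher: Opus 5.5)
Your proposal is correct and takes the same route as the paper: the paper gives no separate proof, and the identity rests on the same bias--variance split of $c\mathbf{1}_n-\mathbf{a}_j$, with $c^\star_{\ell,j}=\bar a_j$ coming from \Cref{lem:mean} and from the zero-gradient, sample-independent-curvature argument in the paper's CMR-Logit score derivation. One small slip: the curvature of $\|W_{:,j}\|_2^2(u-A^{(\ell)}_{s,j})^2$ in $u$ is $2\|W_{:,j}\|_2^2$, not $\|W_{:,j}\|_2^2$, but this is harmless because the argument only uses that the curvature is positive and independent of $s$.
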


\paragraph{Practical implications.}
The bound \eqref{eq:rem-bound} suggests that the quadratic proxy is most accurate precisely for the units we wish to prune: those with low activation variance and hence small $\|\boldsymbol{\delta}(c^\star)\|_2$. For high-variance units (which we retain), proxy accuracy matters less since they are not candidates for removal.

\subsection{Proofs for Main Theory and Named Special Cases}
\label{app:cmr-proofs}

\paragraph{Proof of \Cref{thm:margin}.}
\begin{proof}
Fix an input and interchange intervention pair $(x,I)$, and abbreviate
\[
z_L := z_L^I(x), \qquad z_H := z_H^{\omega(I)}(x), \qquad
\Delta z := z_H-z_L, \qquad y := \hat y_L^I(x).
\]
Suppose $m_I(x)>2\epsilon$ and $\|\Delta z\|_\infty \le \epsilon$.
For every $y' \ne y$,
\[
z_{H,y}-z_{H,y'}
= (z_{L,y}-z_{L,y'}) + (\Delta z_y-\Delta z_{y'})
\ge m_I(x)-|\Delta z_y|-|\Delta z_{y'}|
> 2\epsilon-2\epsilon
=0.
\]
Thus $y$ remains the unique top class for the high-level intervened
model, so disagreement is impossible on the event
$\{m_I(x)>2\epsilon\}\cap\{\|\Delta z\|_\infty\le\epsilon\}$.
Equivalently,
\[
\{\hat y_H^{\omega(I)}(x) \ne \hat y_L^I(x)\}
\subseteq
\{m_I(x)\le 2\epsilon\}
\cup
\{\|\Delta z\|_\infty>\epsilon\}.
\]
Taking probabilities over the verification distribution on $(x,I)$ and
applying the union bound gives the stated event inequality.

For the second term, Markov's inequality applied to the nonnegative
random variable $\|\Delta z\|_\infty^2$ yields
\[
\Pr[\|\Delta z\|_\infty>\epsilon]
\le
\frac{\E[\|\Delta z\|_\infty^2]}{\epsilon^2}.
\]
Since $\|\Delta z\|_\infty^2 \le \|\Delta z\|_2^2 \le q\|\Delta z\|_\infty^2$,
the unnormalized squared logit-distortion objective
$D_2:=\E[\|z_H^{\omega(I)}(x)-z_L^I(x)\|_2^2]$ gives
\[
\Pr[\|\Delta z\|_\infty>\epsilon]\le \frac{D_2}{\epsilon^2}.
\]
If CMR-Logit is reported as the per-logit mean
$\bar D_2:=q^{-1}D_2$, the bound is $q\bar D_2/\epsilon^2$.
\end{proof}

\paragraph{IIA consequence of the margin certificate.}
\begin{proof}
The first display is the theorem plus the Markov bound. Optimizing over
$\epsilon$ gives the variational certificate. If $m_I(x)\ge\gamma$
almost surely, then $M(2\epsilon)=0$ for every $\epsilon<\gamma/2$; take
the limit as $\epsilon$ increases to $\gamma/2$.
\end{proof}

\paragraph{Proof of \Cref{prop:bias-fold}.}
\begin{proof}
Decompose the affine consumer by isolating column $j$:
\[
W a^{(\ell)} + b
= W_{:,j}a^{(\ell)}_j + W_{:,\backslash j}a^{(\ell)}_{\backslash j}+b.
\]
Under the intervention $a^{(\ell)}_j=c$, the first term is the constant
$cW_{:,j}$, which absorbs into the bias. This gives
$W_{:,\backslash j}a^{(\ell)}_{\backslash j}+(b+cW_{:,j})$.
\end{proof}

\paragraph{Proof of \Cref{thm:cmr-risk}.}
\begin{proof}
For a calibration sample $s$, freeze the retained activations
$A_{s,K}$ and view the downstream discrepancy as a function of the
coordinates to be replaced:
\[
r_s(u) := \mathcal{D}_s\!\left(F_{\ell\to L}(A_{s,K},u)\right),
\qquad u\in\mathbb{R}^{|S|},
\]
where $F_{\ell\to L}$ denotes the deterministic downstream computation
from layer $\ell$ to logits or to the scalar task loss, depending on the
choice of $d$. For logit distortion,
$\mathcal{D}_s(z)=d(z,F_{\ell\to L}(A_{s,K},A_{s,S}))$; for supervised
task loss, $\mathcal{D}_s(z)=\ell(z,y_s)$. Define
\[
g_{s,S}:=\nabla r_s(A_{s,S}), \qquad
H_{s,S}:=\nabla^2 r_s(A_{s,S}).
\]
For any replacement $\phi\in\Phi$, write
$\delta_s=\phi(A_{s,K})-A_{s,S}$. Taylor's theorem gives
\[
r_s(A_{s,S}+\delta_s)-r_s(A_{s,S})
=g_{s,S}^{\top}\delta_s+\tfrac12\delta_s^\top H_{s,S}\delta_s
+R_s(\delta_s),
\]
where $R_s$ is the third-order remainder. Dropping $R_s$ and averaging
over $s$ gives the displayed quadratic proxy $Q_S(\phi)$. If the
downstream map is affine in $A_S$ and $d$ is a quadratic logit
distortion, the remainder is zero and the expression is exact.

Now let $\mathcal{G}$ be a partition of $S$ into replacement mechanisms,
for example single units or predefined groups. If
$H_{s,S}=\mathrm{blockdiag}(H_{s,G})_{G\in\mathcal{G}}$ and
$\delta_s=(\delta_{s,G})_{G\in\mathcal{G}}$, then
\[
g_{s,S}^{\top}\delta_s+\tfrac12\delta_s^\top H_{s,S}\delta_s
=
\sum_{G\in\mathcal{G}}
\left(g_{s,G}^{\top}\delta_{s,G}
+\tfrac12\delta_{s,G}^\top H_{s,G}\delta_{s,G}\right).
\]
Averaging over samples yields $Q_S(\phi)=\sum_{G\in\mathcal{G}}Q_G(\phi_G)$.
Diagonal curvature is the special case in which every group has size
one. Positive semidefiniteness is only needed to interpret the resulting
quadratic scores as convex scoring objectives and to ensure uniqueness
when the curvature is positive on the replacement subspace.
\end{proof}

\paragraph{Optimal constant.}
\begin{proof}
With $S=\{j\}$, the perturbation is
$\delta_s=c-A_{s,j}^{(\ell)}$, the sample gradient is $g_s$, and the
sample curvature is $h_s$. The proxy $Q_{\{j\}}$ is exactly the scalar
quadratic in \eqref{eq:taylor-proxy}. Its first-order condition and
positive total curvature condition are those of \Cref{prop:optimal-c}.
\end{proof}

\paragraph{VBP recovery.}
\begin{proof}
Under samplewise stationarity and uniform curvature, \eqref{eq:cstar-summary} gives
$c^\star_{\ell,j}=\bar a_j$. Substituting this constant into the proxy
gives $s_{\ell,j}=(\alpha/2)\Var[\mathbf{a}_j]$ by \Cref{lem:mean}.
Since $\alpha/2$ is positive and common across units in the layer, the
ranking agrees with activation variance, which is precisely
\Cref{thm:vbp}.
\end{proof}

\paragraph{CMR-Logit score.}
\begin{proof}
Assume the map from the layer-$\ell$ activations to the logits is
affine in unit $j$ with coefficient column $W_{:,j}$ (exact at the last
layer, or within a fixed ReLU activation-pattern region; elsewhere the
score is the corresponding next-layer surrogate). Then the logit perturbation induced by replacing unit $j$ with $c$ is
\[
\Delta z_s(c)=W_{:,j}(c-A^{(\ell)}_{s,j}).
\]
Thus the squared logit distortion is
\[
\|\Delta z_s(c)\|_2^2
=\|W_{:,j}\|_2^2(c-A^{(\ell)}_{s,j})^2.
\]
At the original activation the first derivative of this discrepancy is
zero, and the curvature with respect to $A^{(\ell)}_{s,j}$ is the
sample-independent scalar $2\|W_{:,j}\|_2^2$. Therefore
\eqref{eq:cstar-summary}
gives $c^\star=\bar a_j$. Averaging the minimized distortion over
samples gives $\|W_{:,j}\|_2^2 n^{-1}\sum_s(A^{(\ell)}_{s,j}-\bar a_j)^2$,
which is the displayed score.
\end{proof}

\paragraph{Affine WLS score and folding.}
\begin{proof}
The affine special case of \Cref{thm:cmr-risk} for a single
replaced unit $j$ with parent set $P \subseteq K$ minimizes, over
$\theta = (\theta_0, \theta_P)$, the curvature-weighted ridge objective
\[
Q_P(\theta) \;=\;
\mathbf{g}^\top(\mathbf{\Phi}_P\theta - \mathbf{a})
+ \tfrac{1}{2}\,(\mathbf{\Phi}_P\theta - \mathbf{a})^\top D\,(\mathbf{\Phi}_P\theta - \mathbf{a})
+ \tfrac{\lambda}{2}\,\|\theta\|_2^2,
\]
where $\mathbf{\Phi}_P := [\mathbf{1}_n, A_{:,P}]$ is the design matrix,
$\mathbf{a} := \mathbf{a}_j$, $D := \mathrm{Diag}(\mathbf{h})$, and
$\lambda \ge 0$ is an optional ridge parameter.
Differentiate $Q_P$ with respect to $\theta$:
\[
\nabla_\theta
=
\mathbf{\Phi}_P^\top\mathbf{g}
+\mathbf{\Phi}_P^\top D(\mathbf{\Phi}_P\theta-\mathbf{a})
+\lambda\theta.
\]
Setting this gradient to zero gives the normal equations. When
$|P|=0$, the design matrix is $\mathbf{1}_n$, and the scalar normal
equation gives the constant in \eqref{eq:cstar-summary}, with the usual ridge
modification when $\lambda>0$. For folding, write the next affine
consumer as $u=W_Ka_K+W_ja_j+b$. Replacing
$a_j$ by $\theta_0+\theta_P^\top a_P$ gives
$u=W_Ka_K+W_j\theta_P^\top a_P+(b+W_j\theta_0)$, so the parent columns
and bias can be updated exactly as in \Cref{prop:bias-fold}.
\end{proof}

\paragraph{Multi-unit additivity.}
\begin{proof}
The block-diagonal part of \Cref{thm:cmr-risk} gives a sum of independent
group objectives. For single-unit blocks, each summand is the
single-unit score $s_{\ell,j}$ from \Cref{prop:score}. Minimizing a sum
of independent scalar scores subject only to the cardinality
constraint $|S|=k$ selects the $k$ smallest scores, as stated in
\Cref{thm:additivity}.
\end{proof}

\paragraph{Proof of \Cref{prop:relu-scaling-invariance}.}
\begin{proof}
Positive homogeneity of ReLU gives
$\mathrm{ReLU}(s u)=s\,\mathrm{ReLU}(u)$ for every $s>0$, so scaling
the incoming affine parameters of unit $j$ by $s$ changes its activation
from $a_j(x)$ to $s a_j(x)$. The contribution of this unit to the next
preactivation is unchanged because
$(s^{-1}W_{\ell+1,:,j})(s a_j(x))=W_{\ell+1,:,j}a_j(x)$. All other
units and parameters are fixed, hence every subsequent activation and
the final function $f_\theta(x)$ are unchanged. The variance and norm
transform as $\Var[s a_j]=s^2\Var[a_j]$ and
$\|s^{-1}W_{\ell+1,:,j}\|_2^2=s^{-2}\|W_{\ell+1,:,j}\|_2^2$, proving
the product invariance. Applying different scalings to different units
leaves all CMR-Logit scores fixed but multiplies VBP scores by arbitrary
positive factors, so any strict ordering of nonzero-variance units can
be realized by a suitable choice of scalings.
\end{proof}

\subsection{Optional Extension: Multi-Head Attention}
\label{app:attention}

This subsection is an optional extension of the mechanism-replacement
formalism, not a load-bearing claim for the main experiments. Modern
transformer architectures rely on multi-head attention (MHA) as a core
computational primitive; the same intervention logic can treat attention
heads as grouped mechanisms for structured pruning.

\subsubsection{Attention as a Structural Causal Model}

\paragraph{Multi-head attention mechanism.}
Consider an MHA layer with $H$ heads operating on input $X \in \mathbb{R}^{T \times d}$ (sequence length $T$, embedding dimension $d$). Each head $h \in [H]$ computes:
\begin{equation}
\label{eq:head-computation}
\text{head}_h(X) \;=\; \mathrm{softmax}\!\left(\frac{X W^Q_h (X W^K_h)^\top}{\sqrt{d_k}}\right) X W^V_h \;\in\; \mathbb{R}^{T \times d_v},
\end{equation}
where $W^Q_h, W^K_h \in \mathbb{R}^{d \times d_k}$ and $W^V_h \in \mathbb{R}^{d \times d_v}$ are the query, key, and value projections for head $h$. The heads are concatenated and projected:
\begin{equation}
\label{eq:mha-output}
\mathrm{MHA}(X) \;=\; \mathrm{Concat}(\text{head}_1, \ldots, \text{head}_H) W^O \;=\; \sum_{h=1}^{H} \text{head}_h(X) W^O_h,
\end{equation}
where $W^O \in \mathbb{R}^{Hd_v \times d}$ and $W^O_h \in \mathbb{R}^{d_v \times d}$ denotes the block of $W^O$ corresponding to head $h$.

\paragraph{SCM structure.}
The MHA layer induces an SCM where:
\begin{itemize}[leftmargin=1.5em, itemsep=2pt]
    \item \textbf{Exogenous:} Input embeddings $X \sim \mathcal{D}_X$
    \item \textbf{Endogenous:} Head outputs $\{\text{head}_h(X)\}_{h=1}^H$ and attention patterns $\{A_h(X)\}_{h=1}^H$
    \item \textbf{Structural equations:} Defined by \eqref{eq:head-computation}--\eqref{eq:mha-output}
\end{itemize}
Heads contribute \emph{additively} to the layer output via \eqref{eq:mha-output}, which keeps the intervention analysis tractable.

\subsubsection{Hard Interventions on Attention Heads}

\begin{definition}[Head intervention]
\label{def:head-intervention}
For head $h$, a constant intervention $\dop(\mathrm{head}_h := C)$ replaces the head's output with a fixed matrix $C \in \mathbb{R}^{T \times d_v}$, severing its dependence on $X$:
\begin{equation}
\label{eq:head-do}
\mathrm{MHA}^{\dop(h := C)}(X) \;=\; C W^O_h + \sum_{h' \neq h} \mathrm{head}_{h'}(X) W^O_{h'}.
\end{equation}
The zero intervention $\dop(\mathrm{head}_h := 0)$ corresponds to head removal.
\end{definition}

\paragraph{Vectorized notation.}
For a calibration set of $n$ sequences, let $\mathbf{H}_h \in \mathbb{R}^{n \times T \times d_v}$ denote the stacked head outputs. Flattening to $\mathbf{h}_h := \mathrm{vec}(\mathbf{H}_h) \in \mathbb{R}^{nTd_v}$, the intervention $\dop(\mathrm{head}_h := C)$ induces perturbation:
\begin{equation}
\label{eq:head-delta}
\boldsymbol{\delta}_h(C) \;:=\; \mathbf{1}_n \otimes \mathrm{vec}(C) - \mathbf{h}_h \;\in\; \mathbb{R}^{nTd_v}.
\end{equation}

\subsubsection{Quadratic Proxy for Head Removal}

\begin{proposition}[Head importance score]
\label{prop:head-score}
Define the per-sample, per-position, per-dimension gradient and curvature:
\begin{equation}
\label{eq:head-grad-hess}
g_{s,t,i} := \frac{\partial L_s}{\partial [\mathrm{head}_h]_{t,i}}, \qquad
\eta_{s,t,i} := \frac{\partial^2 L_s}{\partial [\mathrm{head}_h]_{t,i}^2},
\end{equation}
with vectorized forms $\mathbf{g}_h, \boldsymbol{\eta}_h \in \mathbb{R}^{nTd_v}$. Under the diagonal scoring approximation, the quadratic proxy for intervening on head $h$ with constant $C$ is:
\begin{equation}
\label{eq:head-proxy}
\Delta_h(C) \;\approx\; \frac{1}{n} \mathbf{g}_h^\top \boldsymbol{\delta}_h(C) + \frac{1}{2n} \boldsymbol{\delta}_h(C)^\top \mathrm{Diag}(\boldsymbol{\eta}_h) \boldsymbol{\delta}_h(C).
\end{equation}
\end{proposition}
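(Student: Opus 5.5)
The plan is to derive the head proxy exactly as the single-unit proxy \eqref{eq:taylor-proxy} was derived, treating head $h$'s output as a block of $Td_v$ scalar activations per sample. Concretely, this is the grouped special case of \Cref{thm:cmr-risk}, with replaced set $S$ equal to the coordinates $\{(t,i)\}$ of $\mathrm{head}_h$ and a constant replacement class $\phi\equiv\mathrm{vec}(C)$.

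\textbf{Step 1: per-sample loss as a function of the head output.} For each calibration sequence $s$, let $r_s(u):=L_s$ be the loss viewed as a function of $u=\mathrm{vec}(\mathrm{head}_h(X_s))\in\mathbb{R}^{Td_v}$, with all other heads and upstream quantities frozen. By \eqref{eq:head-do}, the intervention $\dop(\mathrm{head}_h:=C)$ substitutes $u\mapsto \mathrm{vec}(C)$. The perturbation is therefore $\delta_s=\mathrm{vec}(C)-\mathrm{vec}(\mathbf{H}_h[s])$, and stacking over $s$ gives exactly $\boldsymbol{\delta}_h(C)$ in \eqref{eq:head-delta}.

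\textbf{Step 2: second-order expansion.} Taylor-expand $r_s$ to second order around the observed head output: $r_s(u+\delta_s)-r_s(u)\approx g_s^\top\delta_s+\tfrac12\delta_s^\top H_s\delta_s$. Here $g_s$ has entries $g_{s,t,i}$ from \eqref{eq:head-grad-hess}, and $H_s$ is the $Td_v\times Td_v$ Hessian. Averaging over $s$ with weight $1/n$ gives the grouped $Q_S$ of \Cref{thm:cmr-risk}. The cross-sample Hessian blocks vanish for the same reason as \eqref{eq:hessian-diagonal}: each $L_s$ depends only on sample $s$'s head output. The full Hessian in $\mathbb{R}^{nTd_v}$ is therefore block-diagonal over samples.

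\textbf{Step 3: diagonal scoring approximation.} Replace each $H_s$ by $\mathrm{Diag}(\eta_{s,\cdot,\cdot})$. This is the diagonal-curvature case of \Cref{thm:cmr-risk} and \eqref{eq:block-diag}, applied now within a head across positions and dimensions. Assembling the samples yields $\frac1n\mathbf{g}_h^\top\boldsymbol{\delta}_h(C)+\frac1{2n}\boldsymbol{\delta}_h(C)^\top\mathrm{Diag}(\boldsymbol{\eta}_h)\boldsymbol{\delta}_h(C)$, which is \eqref{eq:head-proxy}. The error is the third-order remainder, controlled as in \Cref{prop:remainder} under a Hessian-Lipschitz assumption along the segment, together with the dropped off-diagonal mass bounded by \Cref{prop:offdiag-additivity}.

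The main obstacle is only bookkeeping, not analysis. One must make sure the vectorization order of $\mathbf{1}_n\otimes\mathrm{vec}(C)$ matches that of $\mathbf{h}_h$, $\mathbf{g}_h$, and $\boldsymbol{\eta}_h$ (sample-major). One must also note that the attention pattern $A_h$ is upstream of the head output, so intervening on $\mathrm{head}_h$ severs it without introducing extra derivative terms. Finally, differentiability of $L_s$ in the head output must hold: the downstream path is $W^O_h$ followed by the smooth or ReLU-region-wise remainder of the transformer, which matches the locality assumptions of \Cref{sec:cmr-theorem}.
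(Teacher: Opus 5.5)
Your proposal is correct and follows essentially the paper's own route: the paper gives no separate proof of this proposition, taking \eqref{eq:head-proxy} to be the single-unit proxy \eqref{eq:taylor-proxy}, equivalently the diagonal-curvature case of \Cref{thm:cmr-risk}, applied to the sample-major vectorized head output, as in your Steps 1--3; its proof of \Cref{prop:head-optimal-c} uses the same coordinatewise separability. Your remarks on remainder and off-diagonal error control go beyond what the paper needs, since the statement is only an approximation.
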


For head \emph{removal} (pruning), the natural intervention is $C = 0$:

\begin{proposition}[Head removal score]
\label{prop:head-removal}
The interventional risk increase from removing head $h$ (setting $C = 0$) under the quadratic proxy is:
\begin{equation}
\label{eq:head-removal-score}
\boxed{
s_h^{\mathrm{remove}} \;:=\; \Delta_h(0) \;\approx\; -\frac{1}{n}\mathbf{g}_h^\top \mathbf{h}_h + \frac{1}{2n} \mathbf{h}_h^\top \mathrm{Diag}(\boldsymbol{\eta}_h) \mathbf{h}_h.
}
\end{equation}
Under samplewise gradient stationarity ($\mathbf{g}_h = \mathbf{0}$) and uniform curvature ($\eta_{s,t,i} = \alpha$), this simplifies to:
\begin{equation}
\label{eq:head-variance-score}
s_h^{\mathrm{remove}} \;=\; \frac{\alpha}{2} \cdot \frac{1}{n}\|\mathbf{h}_h\|_2^2 \;=\; \frac{\alpha}{2} \cdot \mathbb{E}_{s}\!\left[\|\mathrm{head}_h(X_s)\|_F^2\right].
\end{equation}
\end{proposition}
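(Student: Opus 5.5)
The plan is to specialize the diagonal-curvature proxy of Proposition~\ref{prop:head-score} to the single constant $C=0$, and then apply the stationarity and uniform-curvature simplifications exactly as in Lemma~\ref{lem:mean}.

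\textbf{Step 1: the perturbation at $C=0$.} Setting $C=0$ in \eqref{eq:head-delta} kills the term $\mathbf{1}_n\otimes\mathrm{vec}(C)$, so
\[
\boldsymbol{\delta}_h(0)=-\mathbf{h}_h .
\]
Semantically, this is the $\dop(\mathrm{head}_h:=0)$ intervention of Definition~\ref{def:head-intervention}. In that intervention the term $CW^O_h$ vanishes from \eqref{eq:head-do}, so the head is removed.

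\textbf{Step 2: substitute into the proxy.} Plugging $\boldsymbol{\delta}_h(0)=-\mathbf{h}_h$ into \eqref{eq:head-proxy} gives
\[
\frac{1}{n}\mathbf{g}_h^\top(-\mathbf{h}_h)+\frac{1}{2n}(-\mathbf{h}_h)^\top\mathrm{Diag}(\boldsymbol{\eta}_h)(-\mathbf{h}_h).
\]
The two sign flips in the quadratic term cancel, which yields \eqref{eq:head-removal-score} directly. The ``$\approx$'' is inherited unchanged from Proposition~\ref{prop:head-score}: the proxy drops the third-order Taylor remainder, with the same caveat as in Proposition~\ref{prop:remainder}. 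I will note that exactness holds when the loss is quadratic in the head outputs.

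\textbf{Step 3: the special case.} Under samplewise stationarity, $\mathbf{g}_h=\mathbf{0}$, so the linear term vanishes identically. Under uniform curvature, $\mathrm{Diag}(\boldsymbol{\eta}_h)=\alpha I$, so the quadratic term becomes $\frac{\alpha}{2n}\|\mathbf{h}_h\|_2^2$.

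It remains to rewrite $\|\mathbf{h}_h\|_2^2$. Vectorization preserves the Euclidean norm, and the flattened vector concatenates the $n$ per-sample $T\times d_v$ blocks. Hence
\[
\|\mathbf{h}_h\|_2^2=\sum_{s=1}^n\|\mathrm{head}_h(X_s)\|_F^2 ,
\]
and dividing by $n$ gives the empirical expectation $\E_s[\|\mathrm{head}_h(X_s)\|_F^2]$.

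The only point needing care is this norm identity under the $\mathrm{vec}$/Kronecker indexing convention. I will fix the ordering of the flattening to be sample-major, so that it aligns with $\mathbf{1}_n\otimes\mathrm{vec}(C)$. With that convention the identity is immediate.

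I will also remark why the score is a second moment rather than a variance. Zero replacement is not the optimal constant: the optimal constant would be the curvature-weighted mean head output, as in Proposition~\ref{prop:optimal-c}. That choice would give a variance-type score analogous to Lemma~\ref{lem:mean}, whereas fixing $C=0$ gives the uncentered second moment.
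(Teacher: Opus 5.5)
Your proposal is correct and follows the paper's proof. You substitute $\boldsymbol{\delta}_h(0)=-\mathbf{h}_h$ into the diagonal proxy, remove the linear term with $\mathbf{g}_h=\mathbf{0}$, and reduce the quadratic term with $\eta_{s,t,i}=\alpha$. Your explicit Frobenius-norm rewriting fills in a step the paper leaves implicit. The flattening order does not matter there, though, because the squared Euclidean norm is unchanged by permuting coordinates.
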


\begin{proof}
Substituting $C = 0$ into \eqref{eq:head-proxy} with $\boldsymbol{\delta}_h(0) = -\mathbf{h}_h$ yields \eqref{eq:head-removal-score}. Under the stated conditions, the first term $-\frac{1}{n}\mathbf{g}_h^\top \mathbf{h}_h$ vanishes (samplewise gradient stationarity; note $\mathbf{1}^\top \mathbf{g}_h = 0$ alone would not suffice, since the perturbation $-\mathbf{h}_h$ is sample-dependent) and the second term becomes $\frac{\alpha}{2n}\|\mathbf{h}_h\|_2^2$.
\end{proof}

\begin{remark}[Interpretation]
Equation \eqref{eq:head-variance-score} reveals that under idealized conditions, head importance is proportional to the expected squared Frobenius norm of its output: heads producing larger-magnitude outputs are more important. This is consistent with empirical observations that ``dead'' heads (near-zero output) can be safely pruned \citep{voita2019analyzing, michel2019sixteen}.
\end{remark}

\subsubsection{Optimal Constant Intervention for Heads}

Rather than removing a head entirely, we can replace it with an optimal constant matrix.

\begin{proposition}[Optimal head constant]
\label{prop:head-optimal-c}
Assume $\sum_{s=1}^n \eta_{s,t,i} > 0$ for every position--dimension pair $(t,i)$. The optimal constant intervention $C^\star \in \mathbb{R}^{T \times d_v}$ minimizing \eqref{eq:head-proxy} satisfies, elementwise:
\begin{equation}
\label{eq:head-cstar}
C^\star_{t,i} \;=\; \frac{\sum_{s=1}^n \eta_{s,t,i} \cdot [\mathrm{head}_h(X_s)]_{t,i} - \sum_{s=1}^n g_{s,t,i}}{\sum_{s=1}^n \eta_{s,t,i}}.
\end{equation}
Under uniform curvature and per-coordinate gradient stationarity ($\sum_s g_{s,t,i} = 0$), $C^\star_{t,i} = \frac{1}{n}\sum_{s=1}^n [\mathrm{head}_h(X_s)]_{t,i}$, the sample mean at each position.
\end{proposition}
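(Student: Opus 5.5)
The statement to prove is \Cref{prop:head-optimal-c}. The approach mirrors \Cref{prop:optimal-c}: under the diagonal scoring approximation, the head proxy \eqref{eq:head-proxy} is a separable quadratic in the entries of $C$, so the minimization splits into $T d_v$ independent scalar problems.

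\emph{Step 1: write the proxy coordinatewise.} The perturbation \eqref{eq:head-delta} has entries $\delta_{s,t,i}=C_{t,i}-[\mathrm{head}_h(X_s)]_{t,i}$, and the curvature matrix $\mathrm{Diag}(\boldsymbol{\eta}_h)$ is diagonal. Expanding \eqref{eq:head-proxy} therefore gives
\[
n\,\Delta_h(C)=\sum_{t,i}\Bigl[\sum_s g_{s,t,i}\bigl(C_{t,i}-[\mathrm{head}_h(X_s)]_{t,i}\bigr)+\tfrac12\sum_s \eta_{s,t,i}\bigl(C_{t,i}-[\mathrm{head}_h(X_s)]_{t,i}\bigr)^2\Bigr].
\]
This is a sum over pairs $(t,i)$ of functions $Q_{t,i}(C_{t,i})$, each depending on a single entry of $C$.

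\emph{Step 2: minimize each scalar quadratic.} For fixed $(t,i)$, $Q_{t,i}$ has the same structure as $Q(c)$ in the proof of \Cref{prop:optimal-c}: take the vector $\mathbf{a}_j$ to be $([\mathrm{head}_h(X_s)]_{t,i})_s$, $\mathbf{g}$ to be $(g_{s,t,i})_s$, and $\mathbf{h}$ to be $(\eta_{s,t,i})_s$.

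1. The first-order condition is $\sum_s g_{s,t,i}+C_{t,i}\sum_s\eta_{s,t,i}-\sum_s\eta_{s,t,i}[\mathrm{head}_h(X_s)]_{t,i}=0$, which gives \eqref{eq:head-cstar}.
2. The second derivative is $\sum_s\eta_{s,t,i}$, which is positive by the assumption of the proposition. So each $Q_{t,i}$ is strictly convex and its stationary point is its unique minimizer.
3. Because the objective is separable with no cross terms, minimizing it jointly over $C$ is the same as minimizing each coordinate separately. The elementwise minimizers therefore assemble into the unique global minimizer $C^\star$.

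\emph{Step 3: the special case.} Assume uniform curvature $\eta_{s,t,i}=\alpha>0$ and $\sum_s g_{s,t,i}=0$. Substituting into \eqref{eq:head-cstar}, the gradient correction term vanishes, and the curvature-weighted mean becomes $\alpha\sum_s[\mathrm{head}_h(X_s)]_{t,i}/(n\alpha)$. This is the positionwise sample mean.

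There is no real obstacle; the argument is bookkeeping. The only point needing care is to confirm that the diagonal approximation removes every cross term between different $(t,i)$ entries and between samples, so that the reduction to \Cref{prop:optimal-c} is exact. The proof should also note that only the per-coordinate stationarity $\sum_s g_{s,t,i}=0$ is used, not samplewise stationarity, in contrast to the remark after \Cref{prop:head-removal}.
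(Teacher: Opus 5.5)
Your proposal is correct and follows the paper's proof. Both arguments note that the diagonal-curvature proxy \eqref{eq:head-proxy} separates across the $Td_v$ entries of $C$, and that each entry then reduces to the scalar problem of \Cref{prop:optimal-c}; your explicit first-order condition, the strict convexity from $\sum_s \eta_{s,t,i}>0$, and the uniform-curvature substitution fill in details the paper leaves implicit.
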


\begin{proof}
The proxy \eqref{eq:head-proxy} is separable across the $Td_v$ output dimensions under the diagonal scoring approximation. Each dimension reduces to the scalar problem of \Cref{prop:optimal-c}.
\end{proof}

\subsubsection{Bias Folding for Attention Heads}

\begin{proposition}[Head removal via output projection folding]
\label{prop:head-fold}
To implement $\dop(\mathrm{head}_h := C)$, modify the MHA layer as follows:
\begin{enumerate}[label=(\roman*), leftmargin=2em, itemsep=2pt]
    \item Remove head $h$'s parameters $(W^Q_h, W^K_h, W^V_h)$ and the corresponding block $W^O_h$.
    \item 
    For a position-constant intervention $C = \mathbf{1}_T c^\top$
    with $c \in \mathbb{R}^{d_v}$, add the constant contribution to the
    downstream bias: $b' := b + (W^O_h)^\top c$. A position-dependent $C$
    contributes the $T \times d$ matrix $C W^O_h$, which is not a bias
    vector; implementing it requires a fixed sequence length $T$ and a
    stored $T \times d$ additive buffer. In particular, the
    position-dependent optimum $C^\star$ of \Cref{prop:head-optimal-c}
    is foldable only after averaging over positions.
\end{enumerate}
For head removal ($C = 0$), simply delete the head's parameters with no bias update.
\end{proposition}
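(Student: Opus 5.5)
The statement to prove is \Cref{prop:head-fold}, which says that deleting head $h$ and updating the downstream bias exactly implements $\dop(\mathrm{head}_h := C)$. The plan is to mirror the proof of \Cref{prop:bias-fold}, using the additive head decomposition \eqref{eq:mha-output} in place of the column decomposition of $Wa^{(\ell)}$. Concretely, I will start from the intervened layer output \eqref{eq:head-do}, write $\mathrm{MHA}^{\dop(h:=C)}(X) = C W^O_h + \sum_{h'\neq h}\mathrm{head}_{h'}(X)W^O_{h'}$, and note that the second sum is exactly the output of an MHA layer whose head set is $[H]\setminus\{h\}$ and whose output projection is $W^O$ with block $W^O_h$ deleted.

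Two facts follow directly from \eqref{eq:head-computation}. First, $\mathrm{head}_h$ depends only on $(W^Q_h,W^K_h,W^V_h)$. Second, the remaining heads $h'\neq h$ never read head $h$'s parameters. Together these justify step (i): once the head's output is severed by the intervention, deleting $W^Q_h, W^K_h, W^V_h$ and $W^O_h$ changes nothing else in the layer.

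It then remains to show that the residual term $CW^O_h$ is absorbed correctly. For a position-constant intervention $C=\mathbf{1}_T c^\top$, we have $CW^O_h = \mathbf{1}_T (c^\top W^O_h) = \mathbf{1}_T\bigl((W^O_h)^\top c\bigr)^\top$. This is the same row vector added at every position, which is exactly what a bias $b\in\mathbb{R}^d$ of the output projection does when broadcast over tokens. Replacing $b$ by $b' = b+(W^O_h)^\top c$ therefore reproduces the intervened output for every $X$, which is the analogue of $b' = b + cW_{:,j}$ in \Cref{prop:bias-fold}.

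For the removal case $C=0$, the term $CW^O_h$ vanishes, so no bias update is needed.

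For a position-dependent $C$, I will argue that $CW^O_h$ is generally not of the form $\mathbf{1}_T v^\top$: its rows differ whenever the rows of $C$ differ outside $\ker (W^O_h)^\top$. A broadcast bias cannot represent such a term, so implementing it exactly requires storing the $T\times d$ matrix as an additive buffer, and that is only well-defined when the sequence length $T$ is fixed.

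Applying this to $C^\star$ from \Cref{prop:head-optimal-c}, which is position-dependent in general, shows that it folds into a bias only after it is replaced by its position average $\bar c = T^{-1}C^{\star\top}\mathbf{1}_T$. That substitution changes the intervention, so the folded model implements $\dop(\mathrm{head}_h:=\mathbf{1}_T\bar c^\top)$ rather than $\dop(\mathrm{head}_h:=C^\star)$.

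The only delicate step is stating the non-foldability claim precisely. The plan is to exhibit that $CW^O_h$ has non-identical rows generically, rather than asserting it for all $C$. The equivalence itself is routine linear algebra.
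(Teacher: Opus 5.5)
Your proposal is correct and matches the argument the paper relies on. The paper gives no separate proof here; the proposition is the head-level analogue of the proof of \Cref{prop:bias-fold}: isolate the constant term $CW^O_h$ in \eqref{eq:head-do} and absorb $\mathbf{1}_T c^\top W^O_h=\mathbf{1}_T\bigl((W^O_h)^\top c\bigr)^\top$ into the token-broadcast bias. Your kernel criterion (a position-dependent $C$ still folds exactly when all row differences of $C$ lie in $\ker (W^O_h)^\top$) is a slightly sharper form of the paper's informal non-foldability remark.
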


\paragraph{Computational savings.}
Removing $k$ heads from an $H$-head MHA layer reduces:
\begin{itemize}[leftmargin=1.5em, itemsep=2pt]
    \item Query/Key/Value projections: $(H-k)/H$ of original FLOPs
    \item Attention computation: $(H-k)/H$ of original FLOPs  
    \item Output projection: $(H-k)/H$ of original parameters and FLOPs
\end{itemize}
The savings are linear in the number of pruned heads.

\section{Additional Experimental Results}

\subsection{Ablations and diagnostics (full)}
\label{app:ablations}

\paragraph{Affine parent selection.} On CIFAR-10 ConvNet at keep $0.5$,
the mean IIA over parent selectors is $0.264$ for Pearson, $0.265$ for
output-weight-aware Pearson, and $0.266$ for random; random additionally
has lower KL. Selector identity has no measurable effect on interchange
agreement under the corrected affine WLS solve, and the supervised
affine variant is dominated by CMR-Logit at the same budget. This is
consistent with the affine corollary providing extra capacity rather than a
strong inductive bias.

\paragraph{Calibration size and shift.} CMR-Affine accuracy at keep
$128$ improves from $0.864$ to $0.872$ as calibration size grows from
$500$ to $10000$. Under a class-subset calibration shift (calibrate on
classes $0$--$4$ only; evaluate on $1{,}024$ held-out test images
spanning all classes, under $R = 500$ interchange interventions),
CMR-Logit KL is $0.472/0.480$ at $n=500/2000$, against $0.616/0.599$
for VBP; the logit-distortion criterion therefore transfers better
across calibration distributions than the variance criterion.

\paragraph{Off-diagonal curvature diagnostic.} The mean off-diagonal
Frobenius ratio $\|H_B - \mathrm{diag}(H_B)\|_F / \|H_B\|_F = 0.849$ on
random CIFAR-10 ConvNet blocks ($50$ random $32$-unit blocks of the
$256$-dimensional penultimate representation per seed, three seeds,
$n = 2000$ calibration samples); cross-unit interaction is
therefore substantial at the layer scale tested. \Cref{prop:offdiag-additivity}'s
relative error bound is therefore not tight in this regime; iterative
recompute of the CMR-Const scores (a two-step $256 \to 192 \to 128$
schedule), which avoids the additivity assumption by re-scoring after
each prune step, outperforms one-shot scoring at keep $0.5$ (IIA
$0.614$ versus $0.584$, KL $0.597$ versus $0.710$).

\subsection{Powered MNIST baseline with paired tests}
\label{app:mnist-baseline}

In the standard (non-reparameterized) setting, CMR-Logit has a small but
consistent edge over VBP on a two-hidden-layer
$784$--$512$--$512$--$10$ MNIST MLP, pruning the $512$-unit
penultimate layer, evaluated
under the same $R = 2000$ Bernoulli interchange protocol over ten
seeds. In the table, CMR-Logit appears under its predecessor name
Logit-MSE, and cwvar denotes the curvature-weighted
constant-replacement score of \Cref{prop:score}, the predecessor of
CMR-Const. \Cref{tab:mnist-baseline} summarizes mean $\pm$ standard
deviation; paired bootstrap CIs at keep $=384$ give
$\Delta\mathrm{IIA}(\text{CMR-Logit} - \text{VBP}) = +0.0007$
$[-0.0037, +0.0052]$, not significant; at keep $=256$,
$\Delta\mathrm{IIA} = +0.0056$ $[-0.00085, +0.0123]$, also not
significant, but $\Delta\mathrm{KL} = -0.031$ $[-0.042, -0.019]$ is
significant (lower KL is better).

\begin{table}[t]
\centering
\small
\begin{tabular}{@{}llccc@{}}
\toprule
Keep & Method & Test Acc. & IIA ($p{=}0.5$) & KL ($p{=}0.5$) \\
\midrule
\multirow{3}{*}{384} & Logit-MSE & $0.981\pm0.002$ & $0.952\pm0.006$ & $0.035\pm0.007$ \\
 & VBP & $0.981\pm0.002$ & $0.951\pm0.004$ & $0.038\pm0.007$ \\
 & cwvar & $0.981\pm0.002$ & $0.930\pm0.011$ & $0.084\pm0.023$ \\
\midrule
\multirow{3}{*}{256} & Logit-MSE & $0.980\pm0.001$ & $0.782\pm0.006$ & $0.578\pm0.020$ \\
 & VBP & $0.979\pm0.002$ & $0.776\pm0.010$ & $0.609\pm0.026$ \\
 & cwvar & $0.980\pm0.002$ & $0.732\pm0.012$ & $0.861\pm0.087$ \\
\bottomrule
\end{tabular}
\caption{MNIST baseline summary (mean $\pm$ std over seeds 0--9).}
\label{tab:mnist-baseline}
\end{table}

The qualitative interpretation is consistent with the small CMR-Logit edge
reported on the CIFAR-10 ConvNet in \Cref{sec:exp-verification}: the
abstraction lens does not provide a large IIA win in the standard
setting, but is consistently better on logit fidelity. The reparameterization
stress test (\Cref{sec:exp-invariance}) is where the gap becomes large,
not the standard setting.

\subsection{Boolean circuit sanity check}
\label{app:boolean}

We test the discovery procedure on a controlled three-layer Boolean
MLP trained to fit
$y = \mathrm{XOR}(\mathrm{AND}(x_1, x_2), \mathrm{OR}(x_3, x_4))$
from $\mathbf{x} \in \{0, 1\}^8$, where the four irrelevant input
coordinates $x_5, \ldots, x_8$ are independent Bernoulli$(1/2)$. The
trained MLP has two hidden layers of $64$ units each,
with pruning targeting the $64$-unit penultimate layer; the calibration
set is $n = 2000$ inputs drawn from a $4{,}096$-point dataset with an
$80/20$ train/test split (\Cref{tab:boolean-circuit}). Six seeds, $R = 2000$ Bernoulli swaps at $p = 0.5$.

\begin{table}[t]
\centering
\small
\caption{Boolean circuit interchange-fidelity at keep $=32$ and
$16$ (out of 64). CMR-Logit / Logit-MSE matches VBP on IIA and wins
on KL at keep $=32$, and is essentially tied at keep $=16$; both clearly beat random.}
\label{tab:boolean-circuit}
\begin{tabular}{@{}llccc@{}}
\toprule
Keep & Method & Test Acc. & IIA ($p{=}0.5$) & KL ($p{=}0.5$) \\
\midrule
\multirow{4}{*}{32} & Logit-MSE & $1.000\pm0.001$ & $0.927\pm0.020$ & $0.172\pm0.092$ \\
 & VBP & $0.989\pm0.024$ & $0.923\pm0.025$ & $0.234\pm0.149$ \\
 & cwvar & $0.981\pm0.042$ & $0.864\pm0.025$ & $0.527\pm0.251$ \\
 & Random & $0.921\pm0.075$ & $0.794\pm0.058$ & $1.013\pm0.354$ \\
\midrule
\multirow{4}{*}{16} & Logit-MSE & $0.987\pm0.029$ & $0.814\pm0.029$ & $0.806\pm0.202$ \\
 & VBP & $0.951\pm0.050$ & $0.827\pm0.033$ & $0.790\pm0.212$ \\
 & cwvar & $0.928\pm0.067$ & $0.752\pm0.019$ & $1.234\pm0.216$ \\
 & Random & $0.768\pm0.114$ & $0.710\pm0.067$ & $1.509\pm0.559$ \\
\bottomrule
\end{tabular}

\end{table}

CMR-Logit (= Logit-MSE in predecessor notation) and VBP are essentially
tied on this controlled task, both with high IIA at keep $=32$ ($\sim
0.93$). CMR-Logit attains lower KL (0.172 vs 0.234) and higher Test
accuracy (1.000 vs 0.989) at keep $=32$. At more aggressive keep $=16$,
VBP edges out CMR-Logit by 0.013 IIA, while CMR-Logit retains a Test
accuracy advantage. Random pruning loses substantially at both budgets,
confirming that the task is non-trivial.

\subsection{Affine vs.\ constant comparison at aggressive sparsity}
\label{app:affine-deltas}

The affine corollary of \Cref{thm:cmr-risk} adds capacity but is
metric-dependent: it can improve interchange agreement at aggressive
budgets while degrading KL fidelity. \Cref{tab:affine-deltas} reports
paired-test deltas $\Delta(\text{affine} - \text{constant})$ over ten
seeds with $R = 2000$ swaps, on the predecessor MNIST setup at keep
$\in \{64, 128\}$ and parent count $r \in \{4, 16\}$ with two ridge
levels.

\begin{table}[t]
\centering
\small
\caption{Affine $-$ constant deltas at keep $\in \{64, 128\}$ on
MNIST with paired bootstrap 95\% CIs and seed counts. Positive
$\Delta$ means affine is larger; for IIA and Test acc., positive is
better; for KL, positive is worse.}
\label{tab:affine-deltas}
\begin{tabular}{@{}rrlllrr@{}}
\toprule
keep & $r$ & ridge & metric & $\Delta$ (95\% CI) & seeds & $p$ \\
\midrule
64 & 4 & $10^{-4}$ & IIA ($p{=}0.5$) & 0.0490 [0.0338, 0.0653] & 10 & 0.000 \\
64 & 4 & $10^{-4}$ & KL ($p{=}0.5$) & 1.2275 [1.1261, 1.3275] & 10 & 0.000 \\
64 & 4 & $10^{-4}$ & acc & 0.0564 [0.0315, 0.0855] & 10 & 0.004 \\
64 & 4 & $10^{-2}$ & IIA ($p{=}0.5$) & 0.0519 [0.0378, 0.0668] & 10 & 0.000 \\
64 & 4 & $10^{-2}$ & KL ($p{=}0.5$) & 0.8086 [0.7022, 0.9076] & 10 & 0.000 \\
64 & 4 & $10^{-2}$ & acc & 0.0564 [0.0320, 0.0847] & 10 & 0.003 \\
64 & 16 & $10^{-4}$ & IIA ($p{=}0.5$) & 0.0387 [0.0207, 0.0571] & 10 & 0.004 \\
64 & 16 & $10^{-4}$ & KL ($p{=}0.5$) & 1.9373 [1.7361, 2.1020] & 10 & 0.000 \\
64 & 16 & $10^{-4}$ & acc & 0.0588 [0.0334, 0.0895] & 10 & 0.004 \\
64 & 16 & $10^{-2}$ & IIA ($p{=}0.5$) & 0.0506 [0.0359, 0.0664] & 10 & 0.000 \\
64 & 16 & $10^{-2}$ & KL ($p{=}0.5$) & 1.1775 [1.0215, 1.3148] & 10 & 0.000 \\
64 & 16 & $10^{-2}$ & acc & 0.0588 [0.0334, 0.0893] & 10 & 0.004 \\
128 & 4 & $10^{-4}$ & IIA ($p{=}0.5$) & 0.0130 [0.0075, 0.0184] & 10 & 0.002 \\
128 & 4 & $10^{-4}$ & KL ($p{=}0.5$) & 0.6867 [0.6483, 0.7243] & 10 & 0.000 \\
128 & 4 & $10^{-4}$ & acc & 0.0048 [0.0022, 0.0076] & 10 & 0.008 \\
128 & 4 & $10^{-2}$ & IIA ($p{=}0.5$) & 0.0148 [0.0098, 0.0199] & 10 & 0.000 \\
128 & 4 & $10^{-2}$ & KL ($p{=}0.5$) & 0.4629 [0.4019, 0.5224] & 10 & 0.000 \\
128 & 4 & $10^{-2}$ & acc & 0.0051 [0.0025, 0.0079] & 10 & 0.007 \\
128 & 16 & $10^{-4}$ & IIA ($p{=}0.5$) & 0.0112 [0.0036, 0.0189] & 10 & 0.025 \\
128 & 16 & $10^{-4}$ & KL ($p{=}0.5$) & 0.9836 [0.9079, 1.0482] & 10 & 0.000 \\
128 & 16 & $10^{-4}$ & acc & 0.0052 [0.0025, 0.0081] & 10 & 0.008 \\
128 & 16 & $10^{-2}$ & IIA ($p{=}0.5$) & 0.0138 [0.0076, 0.0199] & 10 & 0.002 \\
128 & 16 & $10^{-2}$ & KL ($p{=}0.5$) & 0.6082 [0.5335, 0.6795] & 10 & 0.000 \\
128 & 16 & $10^{-2}$ & acc & 0.0054 [0.0027, 0.0082] & 10 & 0.005 \\
\bottomrule
\end{tabular}

\end{table}

At keep $=64$ with $r = 16$ and ridge $10^{-2}$, the affine variant
gains $\Delta\mathrm{IIA} = +0.0506$ $[+0.0359, +0.0664]$ and
$\Delta\text{Test acc.} = +0.0588$ $[+0.0334, +0.0893]$, but pays
$\Delta\mathrm{KL} = +1.18$ $[+1.02, +1.31]$ (worse). The trade-off is
sharp: aggressive structured pruning benefits from affine capacity on
class-level metrics, but the higher-capacity replacement also amplifies
distributional distortion. This is the empirical version of the metric
dependence flagged in \Cref{sec:limitations}.

\subsection{Wall-clock timing breakdown}
\label{app:timing}

\Cref{tab:timing-cleardata} decomposes the discovery procedure into score,
compile, and verify stages on the predecessor MNIST setup, averaged
over seeds with 95\% bootstrap CIs.

\begin{table}[t]
\centering
\small
\caption{CMR stage timings on MNIST, mean $\pm$ standard deviation
with 95\% bootstrap CIs. Score is a single forward pass over the
calibration set; compile is the bias / weight folding step; verify
is one $R = 2000$ interchange-intervention pass over precomputed
penultimate features (only the compiled head is re-evaluated per swap).}
\label{tab:timing-cleardata}
\begin{tabular}{@{}llllr@{}}
\toprule
keep & step & variant & mean $\pm$ std & 95\% CI \\
\midrule
128 & compile & affine & $0.3151 \pm 0.0280$\,s & [0.2990, 0.3336]\,s \\
256 & compile & affine & $0.2007 \pm 0.0164$\,s & [0.1915, 0.2120]\,s \\
128 & compile & const & $0.0000841 \pm 0.0000055$\,s & [0.0000807, 0.0000875]\,s \\
256 & compile & const & $0.0000751 \pm 0.0000079$\,s & [0.0000703, 0.0000800]\,s \\
all & score & logit\_mse & $0.1844 \pm 0.0455$\,s & [0.1645, 0.2163]\,s \\
128 & verify & affine & $0.004165 \pm 0.000368$\,s & [0.003944, 0.004399]\,s \\
256 & verify & affine & $0.005842 \pm 0.001113$\,s & [0.005237, 0.006600]\,s \\
128 & verify & const & $0.006269 \pm 0.001389$\,s & [0.005558, 0.007218]\,s \\
256 & verify & const & $0.007750 \pm 0.000831$\,s & [0.007222, 0.008256]\,s \\
\bottomrule
\end{tabular}

\end{table}

Scoring is a one-pass operation independent of replacement class
($\sim 0.18$\,s for $n = 2000$). Constant compilation is essentially
free ($< 0.1$\,ms). Affine compilation requires solving the
curvature-weighted normal equations from the affine special case of
\Cref{thm:cmr-risk} and runs in
0.20--0.32\,s; the result is amortized across all subsequent verify
passes. At this last-layer
budget, scoring and affine compilation dominate total wall-clock;
verification is cheap ($< 8$\,ms for $R = 2000$) because the interchange
swaps re-evaluate only the compiled head on precomputed penultimate
features.

\section{Broader Impacts}
\label{app:broader-impacts}

This work is primarily foundational. A positive impact is that mechanism
replacement can make neural-network reduction more inspectable: reduced
models are compiled artifacts whose behavior can be checked under
interchange interventions, rather than only smaller networks with similar
accuracy. A potential negative impact is overinterpretation: a reduced
computational SCM may be mistaken for a real-world causal explanation or
used as evidence of safety in settings where the validation distribution
is too narrow. We therefore emphasize the locality of the assumptions,
the fixed computational meaning of the state map, and the need for
task-specific validation before deployment.

\end{document}